\documentclass[10pt, journal,web]{article}

\usepackage{amsthm}

\newtheorem{theorem}{Theorem}

\newtheorem{proposition}{Proposition}
\newtheorem{corollary}{Corollary}

\usepackage{times}
\usepackage{subcaption}
\usepackage[utf8]{inputenc}
\usepackage{utfsym}
\usepackage{multirow}
\usepackage[normalem]{ulem}
\usepackage{amsmath,amssymb,amsfonts}
\usepackage{algorithmic}
\useunder{\uline}{\ul}{}
\usepackage{url}
\def\BibTeX{{\rm B\kern-.05em{\sc i\kern-.025em b}\kern-.08em
    T\kern-.1667em\lower.7ex\hbox{E}\kern-.125emX}}

\usepackage{graphicx}
\usepackage{authblk}

\usepackage{longtable}
\usepackage{pdflscape}
\usepackage[T1]{fontenc}
\usepackage{bbm}
\usepackage[ruled,vlined]{algorithm2e}
\usepackage{pifont}
\usepackage{xcolor}
\usepackage{hyperref}
\usepackage{booktabs}
\hypersetup{
   colorlinks=true,
    linkcolor=blue,
   citecolor=blue,
    urlcolor=blue
}
\usepackage{makecell}
\usepackage{tabularx}
\usepackage{enumitem}
\usepackage{appendix}

\makeatletter
\renewcommand\AB@affilsepx{, \protect\Affilfont}
\makeatother

\providecommand{\keywords}[1]{%
  \small
  \textbf{\textit{Keywords---}} #1%
}

\begin{document}

\title{\textbf{Homophily-aware Supervised Contrastive Counterfactual Augmented Fair Graph Neural Network}
}
\author[1]{M. Tavassoli Kejani}
\author[2, 3]{F. Dornaika\thanks{Corresponding author}}
\author[4]{C. Laclau}
\author[1,5]{J. M. Loubes}
\affil[1]{\textit{Institut de Mathématiques de Toulouse}}
\affil[2]{\textit{University of the Basque Country}}
\affil[3]{\textit{IKERBASQUE}}
\affil[4]{\textit{Institut Polytechnique de Paris}}
\affil[5]{\textit{INRIA, Projet Regalia}}

\affil[ ]{

\small\texttt{mahdi.tavassoli-kejani@univ-toulouse.fr, fadi.dornaika@ehu.eus, charlotte.laclau@telecom-paris.fr, loubes@math.univ-toulouse.fr}}
\date{}

\maketitle
\begin{abstract}
In recent years, Graph Neural Networks (GNNs) have achieved remarkable success in tasks such as node classification, link prediction, and graph representation learning. However, they remain susceptible to biases that can arise not only from node attributes but also from the graph structure itself. Addressing fairness in GNNs has therefore emerged as a critical research challenge. In this work, we propose a novel model for training fairness-aware GNNs by improving the counterfactual augmented fair graph neural network framework (CAF). Specifically, our approach introduces a two-phase training strategy: in the first phase, we edit the graph to increase homophily ratio with respect to class labels while reducing homophily ratio with respect to sensitive attribute labels; in the second phase, we integrate a modified supervised contrastive loss and environmental loss into the optimization process, enabling the model to jointly improve predictive performance and fairness. Experiments on five real-world datasets demonstrate that our model outperforms CAF and several state-of-the-art graph-based learning methods in both classification accuracy and fairness metrics.
\end{abstract}

\keywords{Graph Neural Networks (GNNs), Fairness, Counterfactual, and Homophily.}
 \hspace{10pt}

\section{Introduction}
Graph Neural Networks (GNNs) have emerged as a powerful framework for learning on graph-structured data, achieving state-of-the-art performance in node classification, link prediction, and graph classification. By aggregating information, from the neighbors of the node, GNNs capture both local and global structural properties. Most existing GNNs, such as Graph Convolutional Networks (GCNs), rely on the assumption of graph homophily (or edge homophily) \cite{zhu2020beyond}, where the connected nodes tend to share similar features or belong to the same class. However, many real-world graphs instead exhibit heterophily, where nodes with dissimilar features or labels are more likely to be connected, for example, to different amino acid types in protein structures. Despite these challenges, GNNs have been successfully applied in domains such as social networks, recommendation systems, and drug discovery. For a comprehensive recent survey of GNNs and applications, see \cite{Gkarmpounis2024}.
At the same time, the relational nature of GNNs makes them particularly sensitive to biases encoded in node attributes or connectivity patterns, which can amplify unfairness in downstream predictions \cite{chen-survey}. 
For example, medical decision-support systems that use patient-similarity graphs may detect characteristic symptoms of cardiovascular disease less accurately in female patients than in male patients. Consequently, these models may perform better for one demographic group than another, leading to unequal treatment recommendations and unfair medical outcomes. Ensuring that GNN-based models generate fair, unbiased, and reliable output is essential, particularly in high-stakes applications.
Fairness in machine learning has received growing attention in recent years, with methods generally seeking to limit the influence of  attributes, such as gender, race, or age, on model predictions when disparities cannot be ethically, legally or socially justified \cite{chouldechova2020snapshot, 3495724.3497012, barocas2017fairness, besse2019can, besse2021survey}. In this work such variables which model an information that could bias the output of a machine learning model, will be referred to as sensitive attributes. We focus our theory on binary variables $S=0$ stands for a minority group while $S=1$ refers to the majority group. This binary case is not restrictive since many rights-violating practices induce a two-way split between affected and unaffected individuals.
Extending this line of work, approaches to fair GNNs can be broadly categorized into four groups \cite{chen2024fairness}:
\begin{enumerate}
\item \textbf{Pre-processing methods}, which mitigate bias before training GNNs (e.g., \cite{spinelli2021fairdrop});
\item \textbf{In-training methods}, which address bias during model optimization (e.g., \cite{liu2023generalized});
\item \textbf{Post-processing methods}, which remove bias from the learned GNN representations (e.g., \cite{kose2023fairness}). This line of work is closely related to counterfactual fairness \cite{kusner2017counterfactua, de2021transport}, which formalizes the principle that a decision should remain unchanged in a hypothetical world where only the sensitive attribute label differs.
\item \textbf{Hybrid methods}, which combine two or more of the above strategies (e.g., \cite{kang2020inform}).
\end{enumerate}
Recently, methods have aimed to mitigate the root cause of bias by modeling the causal relationships among variables. The identified causal structure allows for the adjustment of sensitive data to generate counterfactuals, ensuring that predictions remain unaffected by sensitive attributes. Examples of such methods include NIFTY \cite{agarwal2021towards}, GEAR \cite{ma2022learning}, CAF \cite{guo2023towards}, and FairGB \cite{li2024rethinking}.

The Counterfactual Augmented Fair Graph Neural Network (CAF) \cite{guo2023towards} is a recent in-training fairness method that constructs a causal model based on realistic counterfactuals, rather than perturbing sensitive attribute labels as in NIFTY \cite{agarwal2021towards}. To achieve fairness, CAF requires access to fully supervised sensitive attribute labels and semi-supervised class labels. However, it needs knowledge of all labels of unlabeled nodes, which are obtained through a pre-training phase by GNNs model for generating realistic counterfactuals. The model is then re-trained with additional fairness constraints.

In CAF, node representations are split into two equal parts: one encoding information related exclusively to the class label (content representation), and the other encoding the sensitive attribute (environmental representation). This separation is intended to prevent sensitive information from leaking into the content representation and vice versa. However, in many cases, as shown in Figure~\ref{fig:tsne-german} and \ref{fig:tsne-bail}, these constraints are insufficient to fully disentangle content and environmental representations. To address this limitation, we extend CAF by introducing two new constraints, modified supervised contrastive loss and environmental loss.

A key challenge arises from the so-called \emph{topology bias} \cite{jiang2022topology}: in many graphs, such as social networks \cite{mcpherson2001birds} and citation networks \cite{ciotti2016homophily}, nodes with similar sensitive attribute labels are more likely to connect than nodes with different ones. The creation of such groups of nodes, driven more by sensitive attribute labels than by class labels, can affect message passing in GNNs, and thereby amplify model bias. This bias can be quantified via the \emph{homophily ratio} with respect to sensitive attribute labels, defined as the proportion of edges connecting nodes that share the same sensitive attribute labels. A higher homophily ratio indicates stronger bias, as discussed in  \cite{loveland2025unveiling}, \cite{laclau2022survey}, \cite{chen2024fairness} and references therein. Our method mitigates topology bias by decreasing homophily with respect to sensitive attribute labels while preserving the class label information conveyed by the graph by maintaining homophily with respect to class labels.  Since CAF’s pre-training phase requires access to all unlabeled nodes, we also propose a \emph{graph editing strategy} based on homophily to reduce topology bias as a pre-processing step before applying other constraints. The experimental results, measured using balanced accuracy, demonstrate consistent improvements over existing methods.

In summary, we propose a hybrid approach that extends the CAF framework by combining pre-processing and in-training techniques. Specifically, we introduce a graph editing strategy in the pre-processing phase and introduce two new training objectives (modified supervised contrastive loss and environmental loss) designed to preserve task-relevant information while promoting fairness.

The main contributions of this paper are as follows.
\begin{enumerate}
    \item We propose a \emph{graph editing strategy} to mitigate topology bias with respect to sensitive attribute labels while preserving the class label information.
    \item We introduce a modified \emph{Supervised Contrastive Loss} by Student-t von Mises-Fisher that encourages nodes with the same label to have similar content representations, regardless of sensitive attribute labels.
    \item We define an \emph{Environmental Loss} that facilitates  disentanglement by penalizing the similarity between the environmental representations of nodes with differing sensitive attribute labels.
    \item We conduct extensive experiments on five real-world datasets, with particular attention to \emph{balanced accuracy} as a utility measure, an aspect often overlooked in prior work.
\end{enumerate}
The remainder of this paper is organized as follows. Section~2 reviews recent methods for fair GNNs. Section~3 introduces the CAF framework. Section~4 presents our proposed extensions, including graph editing, Supervised Contrastive Loss, and Environmental Loss. Section~5 reports experimental results and analysis. Finally, Section~6 concludes and discusses directions for future work.

\section{Related Works}
In this article, we focus on bias and graph-based models. Existing approaches adapt classical fairness techniques to the graph domain, either by modifying model training or through post-processing methods. Among the most popular methods, we can mention FairGNN \cite{dai2021say} which uses adversarial debiasing to enforce independence between node embeddings and sensitive attributes. Similarly, FairVGNN \cite{wang2022improving} enhances fairness by masking sensitive-correlated feature channels and clamping weights to reduce the influence of sensitive information, with the support of adversarial discriminators. In addition, FairDrop \cite{spinelli2021fairdrop} proposes an edge-dropout algorithm to improve fairness and reduce homophily in graph representation learning. Similarly, FairWalk \cite{rahman2019fairwalk} incorporates fairness into random-walk sampling for node embedding, and EDITS \cite{dong2022edits} addresses bias in both node features and graph topology via a data repair strategy. Other work draws on tools such as optimal transport to design fair representations in structured settings \cite{gordaliza2019obtaining, laclau2021all}. 
A more recent and principled direction involves disentangled representation learning. This notion requires that a model's prediction for a node remain unchanged in a hypothetical scenario where the sensitive attribute label differs, while all other factors remain constant. Methods such as NIFTY \cite{agarwal2021towards} and GEAR \cite{ma2022learning} generate counterfactual graph variants by perturbating node features or structure, and align representations between factual and counterfactual versions. However, these approaches often rely on heuristics that may produce unrealistic counterfactuals, potentially harming model performance.
CAF (Counterfactual Augmented Fair GNN) \cite{guo2023towards} addresses these limitations by explicitly modeling the data-generating process using a Structural Causal Model. CAF disentangles node representations into content and environmental representations: the former is label-relevant and invariant to sensitive attributes, while the latter captures variation due to those attributes. This disentangled design leads to improved fairness. FairGB \cite{li2024rethinking} mitigates unfairness in GNNs through group balancing, using real counterfactual node mixup and a contribution alignment loss. FairSAD \cite{zhu2024fair} enhances the fair separation of sensitive attribute-related information into an independent component to mitigate its impact by using a channel masking mechanism to adaptively identify the component related to sensitive attributes. Recently, FairSR \cite{liu5320563rethinking} integrates structural rebalancing with adversarial learning to ensure equitable outcomes for nodes with varying degrees but similar functionalities in prediction tasks.

This work differs from existing methods in three key aspects. (i) Unlike FairDrop, which is an in-processing method that reduces homophily with respect to sensitive attribute labels, our approach simultaneously increases homophily with respect to class labels to preserve class label information, requires no hyper-parameters, and operates as a pre-processing step. (ii) We improve the CAF framework by introducing two novel training objectives that promote disentanglement and fairness more effectively than the original objective.

\section{Problem Statement and Background Knowledge}

\subsection{Problem statement}
\paragraph{Notations.} In this study, our training data consists of $n$ individuals, each described by a feature vector $ x_i \in \mathbb{R}^d$. The task is to predict a binary label $Y \in \{0, 1\}$, which is observed only for a subset of individuals. We consider a semi-supervised setting where the index set $\{1, \dots, n\}$ is partitioned into a labeled subset \( \mathcal{L} = \{1, \dots, \ell\} \) and an unlabeled subset \( \mathcal{U} = \{\ell+1, \dots, \ell + u\} \), such that \( n = \ell + u \). The binary class label \( Y_i \in \{0, 1\} \) is observed only for individuals in \( \mathcal{L}\). Each individual is also associated with a binary sensitive attribute $S \in \{0, 1\}$. At training, we have access to the sensitive attribute labels for all individuals like CAF. 

\paragraph{From tabular data to graphs.} We model the relationships between individuals using a graph, where each node represents an observed individual. Formally, a graph is defined as a pair \( G = (\mathcal{V}, \mathcal{E}) \), where \( \mathcal{V} \) is the set of nodes with \( |\mathcal{V}| = n \), and \( \mathcal{E} \subseteq \mathcal{V} \times \mathcal{V} \) is the set of edges. The connectivity between nodes is encoded by the adjacency matrix \( \mathcal{A} \in \mathbb{R}^{n \times n} \), where \( \mathcal{A}_{i,j} = 1 \) if there is an edge from node \( i \) to node \( j \), and \( \mathcal{A}_{i,j} = 0 \) otherwise. For standard definitions of graphs, we refer the reader to \cite{bondy2008graph}. Finally, the feature vector $x_i$ can also be used to construct the graph. In this paper, we focus on the case of a binary sensitive attribute. However, following the approaches of \cite{ma2022learning, guo2023towards}, our model naturally extends to accommodate more general settings, including weighted graphs and non-binary sensitive attributes.
\paragraph{Objective}
Our objective is to learn a \emph{fair classifier} from graph-structured data: that is, a model that predicts the label \( Y_i \in \{0,1\} \) for each node \( i \), while minimizing the influence of the sensitive attribute \( s_i \in \{0,1\} \) on the predictions. To achieve this, we follow a two-step approach. First, we learn a node representation \( h_i \in \mathbb{R}^{d'} \) that captures relevant information from both the node features and the graph structure via an encoder \( f_{\theta}: \mathcal{X}^{n \times d} \times \mathcal{A}^{n \times n} \to \mathcal{H}^{n \times d'} \), typically implemented as a Graph Neural Network (GNN), such as GCN \cite{kipf2016semi} and GraphSAGE \cite{hamilton2017inductive}. This encoder aggregates information from each node’s neighborhood, learning low-dimensional embeddings that preserve both attribute and structural information. Second, we apply a classifier \( g_{\phi}: \mathbb{R}^{d'} \to [0,1] \) to the learned embeddings to produce predictions \( \hat{Y}_i = g_{\phi}(h_i) \).
Importantly, we seek embeddings that are not only predictive, but also \emph{fair}, in the sense that they are invariant or at least minimally informative with respect to the sensitive attribute \( S \). This separation between utility and bias is central to our approach to fairness-aware learning.

\subsection{Fairness In Graph Neural Networks}
Researchers assess fairness for Graph Neural Networks across various fundamental levels. In their work \cite{chen2023fairness}, the authors classify fairness evaluation metrics for Graph Neural Networks into prediction-level metrics, graph-level metrics, neighborhood-level metrics, and embedding-level metrics. In our work, we focus on prediction-level and graph-level metrics, which we explain in this section.
\subsubsection{Prediction-level (Statistical Parity (SP) \cite{dai2021say})} The statistical parity (SP) metric, also known as demographic parity (DP), requires that the predictions made by the model be independent of the sensitive attribute label ($s$). It is defined as follows:
\begin{equation}
    \Delta_{SP} = {\lvert\mathcal{P}({\hat{y} = 1 \lvert \ s = 0)}) - \mathcal{P}({\hat{y} = 1 \lvert \ s = 1)}\lvert}
\end{equation}
Where \( \Delta_{SP} = 0 \) implies that all groups have the same selection rates, indicating complete fairness. Statistical parity measures the gap in preferential treatment between different groups. However, it's important to note that \( \Delta_{SP} \) does not consider whether individuals are qualified or not, as it does not take into account the ground-truth label ($y$).

\subsubsection{Prediction-level (Equal Opportunity (EO) \cite{dai2021say})} This principle dictates that the probability of a positive outcome being predicted for instances in a positive class should be the same for all subgroup members. It is defined as follows:
\begin{equation}    
    \Delta_{EO} = \left\lvert \mathcal{P}(\hat{y} = 1 \,|\, {y} = 1 , s = 0) \right. -
    \left. \mathcal{P}(\hat{y} = 1 \,|\, {y} = 1 , s = 1) \right\rvert
\end{equation}
Here \( \Delta_{EO} = 0 \) implies equal true positive rates across subgroups, indicating fairness.

\subsubsection{Graph-level Homophily Ratio \cite{chen2023fairness}} Graph-level fairness metrics consider the biases that arise from both the graph topology G and sensitive attributes. These metrics are largely based on the notion of homophily. The Homophily global ratio denotes the number of nodes which are connected sharing the same sensitive attribute label. It depends both on the sensitive attribute labels and on the geometry of the graph. It is defined as follows:
\begin{equation}
hr_s = \frac{\left| \{ (u, v) \in \mathcal{E} \mid s_u = s_v \} \right|}{|\mathcal{E}|}.
\end{equation}

In a similar way, we can define the global homophily (edge-homophily) ratio \cite{zhu2020beyond} with respect to class labels as follows.  
\begin{equation}
hr_c = \frac{\left| \{ (u, v) \in \mathcal{E} \mid y_u = y_v \} \right|}{|\mathcal{E}|}.
\end{equation}

These two measures quantify the level of connectivity between nodes with similar sensitive attribute labels and between nodes with similar class labels. 

\subsection{The Counterfactual Augmented Fair Graph Neural Network (CAF)}
We recall the CAF method of Guo et al.~\cite{guo2023towards}, which forms the basis of our algorithm. The framework partitions the latent representation $\mathcal{H}$ into two components: a content representation $C$ (label-relevant and insensitive to sensitive attributes) and an environmental representation $E$ (primarily dependent on sensitive attributes). Formally,
\begin{equation}
    f_{\theta}(\mathcal{X}, \mathcal{A}) = \mathcal{H} = [ {C} , {E} ]
\end{equation}

The first $d_{c}$ columns of $\mathcal{H}$ constitute the content matrix $C$, used exclusively for prediction, while the next $d_{e}$ columns form the environmental matrix $E$. For balance, $d_c$ and $d_e$ are set equal.
Thus, the latent representation of each node $i$ can be written as:
\begin{equation}
    h_i = [c_i, e_i].
\end{equation}
Note that computing counterfactuals requires the estimated labels of all nodes. In the pre-training phase, CAF employs available labels to guide representation learning. A pre-training encoder $f_{\theta}$ (GraphSAGE) and a classifier $f_{\phi}$ (Sigmoid classifier), parameterized by $\phi$, are jointly trained on the content representations to predict the class distribution of labeled nodes, optimized with cross-entropy loss:
\begin{equation}\label{eq:forecast}
f_{\phi}(c_i) = \hat{y}_i
\end{equation}
The trained model is then applied to the unlabeled nodes to obtain pseudo-labels $\hat{y}$, which are subsequently used for counterfactual construction. For each node $i$, two types of counterfactuals are defined:
\begin{itemize}
    \item nodes with the same pseudo-label but a different sensitive attribute label, i.e.,  $\hat{y}_i = \hat{y}_j$ and $s_i \neq s_j$;
    \item nodes with the different pseudo-label but same sensitive attribute label, i.e., $\hat{y}_i \neq \hat{y}_j$ and $s_i = s_j$.
\end{itemize}

Since selecting counterfactuals directly in the original data space is computationally expensive due to graph distance calculations, CAF instead measures distances in the latent space $\mathcal{H}$, thus reducing computational cost. 

Formally, let $h_{i}^{e}$ denote the counterfactual embedding of node $i$ that shares a similar pseudo-label but has a different sensitive attribute label, and let $h_{i}^{c}$ denote the counterfactual embedding of node $i$ that has a dissimilar pseudo-label but the same sensitive attribute label, defined as:
\begin{equation}\label{eq:cf_e}
h_{i}^{e} \in \arg\min_{h_j \in \mathcal{H}} \bigl\{ \|h_i - h_j\|_2^2 \;\big|\; \hat{y}_i = \hat{y}_j, \, s_i \neq s_j \bigr\},
\end{equation}
\begin{equation}\label{eq:cf_c}
h_{i}^{c} \in \arg\min_{h_j \in \mathcal{H}} \bigl\{ \|h_i - h_j\|_2^2 \;\big|\; \hat{y}_i \neq \hat{y}_j, \, s_i = s_j \bigr\},
\end{equation}
with $h_{i}^{e}=(c_i^e,e_i^e)$ and $h_{i}^{c}=(c_i^c,e_i^c)$.

Note that for each factual input, multiple counterfactuals can be obtained by selecting a set $\mathcal{K}$ of counterfactuals rather than a single one, as indicated in Equations.~\ref{eq:cf_e} and \ref{eq:cf_c}.

\paragraph{Training loss of CAF}
The learning process is driven by three loss terms. 
\begin{enumerate}
    \item A prediction loss $\mathcal{L}_\mathrm{pred}$, which is a standard cross-entropy loss for label prediction based on the content representation. 
    
    \item An invariance loss $\mathcal{L}_\mathrm{inv}$ that encourages the content and environmental components to remain independent of the sensitive attribute label, using the counterfactuals. The invariance loss is defined as:
    \begin{eqnarray}
          \mathcal{L}_\mathrm{inv} = \frac{1}{|\mathcal{V}|\cdot K} & &  \sum_{v_i \in V} \sum_{k=1}^{K} [ \text{dis}(c_i, c_{i}^{e_{k}}) 
+  \text{dis}(e_i, e_{i}^{c_{k}}) 
\nonumber\\
&+&  \gamma K\cdot \left| \cos(c_i, e_i) \right| ]
    \end{eqnarray}
where $dis(.,.)$ is a distance metric (e.g., cosine or $L2$ distance), and $\gamma$ controls the orthogonality between $c_i$ and $e_i$.
\item A sufficiency loss $\mathcal{L}_\mathrm{suf}$ that ensures that the learned representation preserves graph structure by maintaining the same topological properties as the original input graph:
\begin{eqnarray}
      \mathcal{L}_{\mathrm{suf}} = \frac{1}{|\mathcal{A}| + |\mathcal{A}^-|}
    &\times& \sum_{({i}, {j})\in \mathcal{A} \cup \mathcal{A}^-} -a_{i,j} \log p_{i,j} \nonumber \\ &-& (1 - a_{i,j}) \log (1 - p_{i,j}),
\end{eqnarray}

where $\mathcal{A}^{-}$ is the set of negative edges, and $p_{i,j}$ is the predicted probability of a link between nodes $i$ and $j$.
\end{enumerate}
The total loss function is given by 
\begin{equation}
\min_{\theta, \phi} \mathcal{L} = \mathcal{L}_\mathrm{pred} + \alpha \mathcal{L}_\mathrm{inv} + \beta \mathcal{L}_\mathrm{suf}
\end{equation}
where $\theta$ and $\phi$ are the GNN encoder and prediction head parameters, $\alpha$ and $\beta$ are hyperparameters controlling the trade-off between fairness and accuracy. 

\paragraph{Limitations of CAF}  
CAF improves fairness by disentangling node representations into content ($C$) and environmental ($E$) representations using two types of counterfactual equations~\ref{eq:cf_e} and \ref{eq:cf_c}.  

$C$ is encouraged to align nodes with the same class pseudo-label but different sensitive attribute labels, while $E$ is encouraged to align nodes with the same sensitive attribute label but different class pseudo-labels.  
This design aims to prevent class-relevant information from leaking into $E$ and sensitive-relevant information from leaking into $C$, with orthogonality constraints further supporting this separation.  
However, two key limitations remain.  

\begin{itemize}
\item A limitation of CAF is that nodes sharing the same class pseudo-label and the same sensitive attribute label are excluded from the invariance loss, and thus this information is not considered in $C$. Although prediction loss $\mathcal{L}_\mathrm{pred}$ captures this information, it is not a sufficiently strong training objective to ensure that nodes with similar class labels learn similar embeddings, while nodes with dissimilar class labels are pushed apart in the embedding space. Consequently, the model cannot fully exploit all the class-relevant information when learning node representations.

\item Since $E$ is only trained on intra-group pairs (same sensitive attribute label, different class pseudo-labels), inter-group variation is not modeled, which limits separation across sensitive groups.  
\end{itemize}

As a result, $C$ and $E$ may not fully capture class and sensitive information, which can compromise both fairness and predictive performance.

\paragraph{Our Contribution.}
To address these limitations, we propose a novel algorithm that not only mitigates incomplete disentanglement 
but also reduces \emph{topology bias} in graphs. 
The details of our proposed method are described in the following section.

\section{The HSCCAF Framework}
We propose the \textit{Homophily-Aware Supervised Contrastive Counterfactual Augmented Fair Graph Neural Network} (HSCCAF), which extends the Counterfactual Augmented Fairness (CAF) model \cite{guo2023towards}. Specifically, HSCCAF introduces fairness-aware graph editing and incorporates two additional training objectives: \textit{Supervised Contrastive Loss} and \textit{Environmental Loss}. These components are designed to improve fairness while preserving predictive accuracy. Figure~\ref{fig:model} illustrates the overall architecture.
\begin{figure*}[h]
    \centering
\includegraphics[width=0.9\textwidth]{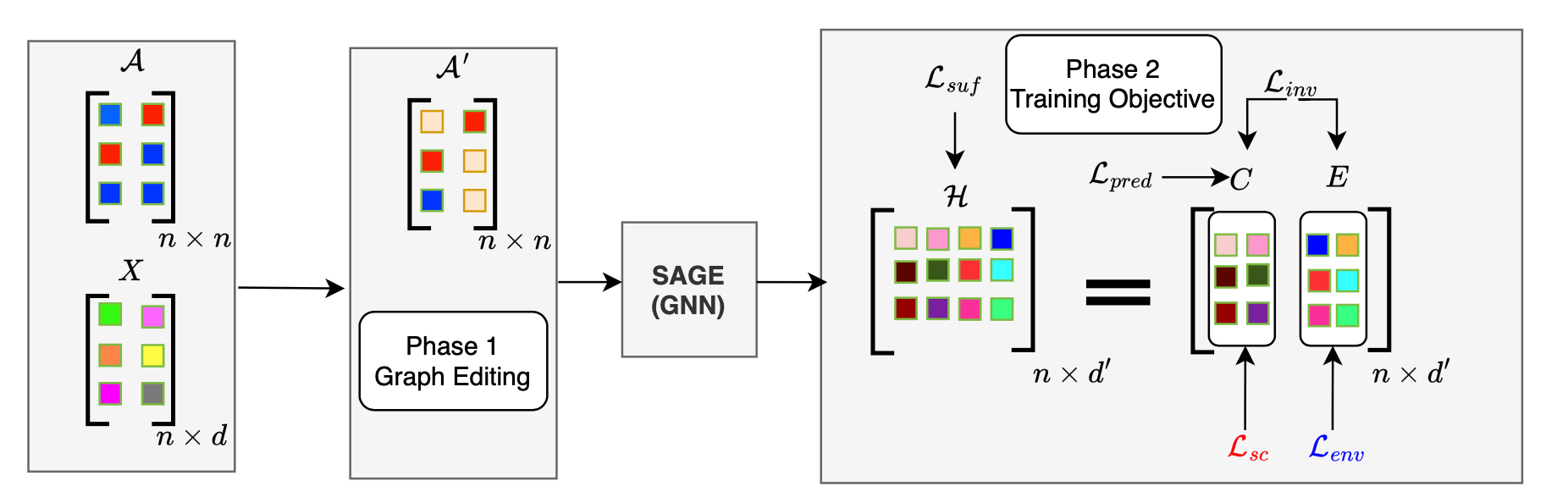}
    \caption{Overview of HSCCAF. The framework extends CAF with (i) a fairness-aware graph editing phase that edits the input graph to adjust homophily, and (ii) two additional loss terms that explicitly regularize the content and environmental representations.}
    \label{fig:model}
\end{figure*}

\subsection{Fairness-Aware Graph Editing}
The objective of this phase is to adjust the topology of the input graph in a way that balances predictive accuracy and fairness. To this end, we selectively edit edges with two complementary goals. First, we aim to \textbf{increase homophily with respect to class labels} ($hr_c$), which strengthens the label-consistent neighborhoods that most GNNs rely on for effective message passing, thereby preserving class label information. Second, we aim to \textbf{reduce homophily with respect to sensitive attribute labels} ($hr_s$), which prevents the model from overemphasizing demographic similarities, promotes fairness by mitigating bias. This fairness-aware edge editing acts as a pre-processing step that reshapes the graph before message passing.

We adopt a simple, interpretable edge-editing strategy.
\begin{itemize}
    \item Increase $hr_c$ to preserve class label information;
    \item Decrease $hr_s$ to mitigate topology-induced bias.
\end{itemize}

To operationalize this, edges are categorized into four types using pseudo-labels $\hat{Y}$ (estimated class labels) and sensitive attribute labels $S$:
\vspace{0.5em}

\begin{tabular}{@{}l l l@{}}
\textbf{Type I:}   & $\hat{y}_i = \hat{y}_j,$ & $s_i = s_j$ \\
\textbf{Type II:}  & $\hat{y}_i = \hat{y}_j,$ & $s_i \neq s_j$ \\
\textbf{Type III:} & $\hat{y}_i \neq \hat{y}_j,$ & $s_i = s_j$ \\
\textbf{Type IV:}  & $\hat{y}_i \neq \hat{y}_j,$ & $s_i \neq s_j$
\end{tabular}

\vspace{0.5em}
Among these, \textbf{Type~III} edges are the most harmful, as they simultaneously reduce class-label consistency and reinforce sensitive-group homogeneity. Therefore, we remove all of these edges during graph editing. In Appendix \ref{appendix}, we provide a proof that removing all Type~III edges is better than removing only a subset of them.

This phase introduces no additional hyperparameters and runs in linear time, \(\mathcal{O}(|\mathcal{E}|)\), requiring only a single pass through the edge list with access to pseudo-labels and sensitive attribute labels. A formal proof of the monotonic effect on $hr_c$ and $hr_s$ is provided in Appendix~\ref{appendix}.

\subsection{Supervised Contrastive Loss}
Supervised contrastive loss, $\mathcal{L}_\mathrm{sc}$, plays a crucial role in Supervised Contrastive Learning~\cite{khosla2020supervised}, encourages content representations of nodes of the same class to be similar while pushing apart nodes from different classes. Unlike CAF, which relies on counterfactual pseudo-labels to enforce invariance, our approach directly enforces intra-class similarity using only the labeled nodes. This helps the model learn a robust content ($C$) representation, improving fairness by reducing the implicit encoding of sensitive attributes, without compromising accuracy.

Intuitively, the loss pulls together nodes that share the same class label in the latent space while pushing apart nodes from different classes, ensuring label-consistent embeddings independent of sensitive attribute labels. This directly strengthens the quality of the content representation for prediction.

Formally, supervised contrastive loss is defined as:

\begin{equation}
\mathcal{L}_\mathrm{sc}
= - \sum_{i=1}^{l} \frac{1}{\lvert \mathcal{P}(i)\rvert}
\sum_{p\in\mathcal{P}(i)}
\log\left(
\frac{\exp\!\big(\Phi(\mathbf{c}_i,\mathbf{c}_p)\big)}
{\displaystyle\sum_{a\in\mathcal{V}(i)} \exp\!\big(\Phi(\mathbf{c}_i,\mathbf{c}_a)\big)}
\right)
\end{equation}

\begin{equation}
\Phi(\mathbf{c}_i,\mathbf{c}_j)
= \frac{1 + \operatorname{cos}(\mathbf{c}_i,\mathbf{c}_j)}
{1 + k\big(1 - \operatorname{cos}(\mathbf{c}_i,\mathbf{c}_j)\big)} - 1,
\end{equation}

where $\mathcal{P}(i) \equiv \{ p \in \mathcal{V}(i): \hat{\mathbf{y}}_{p} \equiv \hat{\mathbf{y}}_{i} \}$ is the set of positive samples that share the same class label as node $i$, and $\mathcal{V}(i)$ denotes all other nodes considered for contrast.

Instead of the standard dot product, we use the t-vMF (Student-t von Mises–Fisher) similarity~\cite{kobayashi2021t}, which better captures the angular similarity on the hypersphere and provides a more discriminative metric, improving generalization performance.

\subsection{Environmental loss}
Environmental loss ensures that samples from different sensitive groups have dissimilar environmental representations. This separation is maintained regardless of the class labels. This is important because CAF encourages samples that share the same sensitive group but differ in class labels to have similar embeddings. In contrast, our goal is to enforce dissimilar embeddings for samples with different sensitive groups, regardless of their class labels. By introducing this loss, we inject more information about the sensitive attribute into the environmental representation, helping the model learn a more robust environmental (E) representation. As a result, this improves both accuracy and fairness.

The Environmental Loss, $\mathcal{L}_{\mathrm{env}}$, operates by encouraging nodes with dissimilar sensitive attribute labels to have dissimilar environmental representations in the latent space. $\mathcal{L}_{\mathrm{env}}$ is defined as follows:

\begin{equation}
    \mathcal{L}_{\mathrm{env}} = - \frac{1}{n} \sum_{i=1}^n \frac{1}{K'} \sum_{j=1}^{K'} \text{dis}(e_i, e_j)
\end{equation}

Here, $K'$ denotes the number of nodes with dissimilar sensitive attribute labels, defined as a hyperparameter, whose environmental representations are close to the current node. The distance measure $\mathrm{dis}$ is defined as the L2 distance between the environmental representations of samples $e_i$ and $e_j$ from different sensitive groups:
\begin{equation}
\mathrm{dis}(e_i, e_j) = \| e_i - e_j \|_2 .
\end{equation}
\subsection{Putting Everything Together}
The training of HSCCAF proceeds in three stages, summarized in Algorithm~\ref{alg:HSSCAF}:

\paragraph{Pre-training} Following CAF~\cite{guo2023towards}, we employ a GNN encoder (GraphSAGE) together with a Sigmoid classifier $f_{\phi}$. The model is pre-trained on labeled nodes using cross-entropy loss to learn content representations and predict class distributions. The trained predictor is then applied to unlabeled nodes to infer pseudo-labels, which are used in the subsequent stages of our proposed method:

\begin{equation}
\min_{\theta, \phi} \mathcal{L} = \mathcal{L}_\mathrm{pred}.
\end{equation}

\paragraph{Phase 1: Fairness-Aware Graph Editing} Using pseudo-labels and sensitive attribute labels, harmful edges (Type~III) are removed to increase homophily with respect to class labels ($hr_c$) while decreasing homophily with respect to sensitive attribute labels ($hr_s$). This pre-processing step edits the graph to promote both accuracy and fairness.

\paragraph{Phase 2: Full Training} We train the model on the edited graph using the following loss:
\begin{equation}
\min_{\theta, \phi} \mathcal{L} = \mathcal{L}_\mathrm{pred} + \alpha \mathcal{L}_\mathrm{inv} + \beta \mathcal{L}_\mathrm{suf} + \omega \mathcal{L}_\mathrm{sc} + \eta \mathcal{L}_\mathrm{env},
\end{equation} 

where $\alpha$, $\beta$, $\omega$, and $\eta$ are trade-off parameters control the relative influence of each term (see Section~\ref{sec:exp}).  

The latent representation is partitioned into content ($C$) and environmental ($E$) components. Performance and fairness improve when sensitive attribute information is confined to $E$, leaving $C$ focused on label-relevant features. To achieve this, Supervised Contrastive Loss and Environmental Loss are employed to regularize $C$ and $E$, respectively, ensuring that each part appropriately incorporates the intended information.

The Algorithm~\ref{alg:HSSCAF} summarizes all three phases of HSCCAF, including pre-training, graph editing, and full training.

\begin{algorithm}[H]
\caption{HSCCAF}
\label{alg:HSSCAF}
\begin{algorithmic}[1]
\REQUIRE Graph $G=(V,A)$, features $X$, sensitive attributes $s$, labels $y$ (partially known)
\ENSURE Trained GraphSAGE encoder parameters $\theta$ and predictor parameters $\phi$
\STATE \textbf{Initialize Encoder (2-layers GraphSAGE):} $f_{\theta}(X,A) \to H \in \mathbb{R}^{n \times (d_c+d_e)}$ and $d_c = d_e$
\STATE \textbf{Initialize Predictor (1-layer Sigmoid Classifier):} $f_{\phi}(C)\to \hat{y}$
\STATE \textbf{Pre-training Phase}
  \FOR{$t=1$ \textbf{to} $T_{\text{pre}}$}
    \STATE $H \gets f_{\theta}(X,A)$; partition into $C,E$
    \STATE $\hat{y} \gets f_{\phi}(C)$
    \STATE Update $\theta,\phi$ by minimizing $\mathcal{L}_\mathrm{pred}$
  \ENDFOR
\STATE \textbf{Phase1 (Fairness-Aware Graph Editing)}
\FOR{each edge $(i, j) \in \mathcal{E}$}
    \IF{$\hat{y}_i \ne \hat{y}_j$ \AND $s_i = s_j$}
        \STATE Remove edge $(i, j)$ from $\mathcal{E}$
    \ENDIF
\ENDFOR
\STATE \textbf{Phase2}
  \FOR{$t=1$ \textbf{to} $T_{\text{train}}$}
    \STATE $H \gets f_{\theta}(X,A)$; partition into $C,E$
    \IF{$t \bmod 5 = 0$}
      \STATE $\hat{y} \gets f_{\phi}(C)$
      \FOR{each node $i \in V$}
        \STATE Select $K$ nearest $h_i^{e_k}$ s.t.\ $(\hat{y}_i=\hat{y}_j,\; s_i \neq s_j)$
        \STATE Select $K$ nearest $h_i^{c_k}$ s.t.\ $(\hat{y}_i\neq\hat{y}_j,\; s_i=s_j)$
      \ENDFOR
    \ENDIF
    \STATE Compute $\mathcal{L} = \mathcal{L}_\mathrm{pred} + \alpha \mathcal{L}_\mathrm{inv} + \beta \mathcal{L}_\mathrm{suf} + \omega \mathcal{L}_\mathrm{sc} + \eta \mathcal{L}_\mathrm{env}$
    \STATE Update $\theta, \phi$ by minimizing $\mathcal{L}$
  \ENDFOR
\end{algorithmic}
\end{algorithm}

\section{Experiments}
\label{sec:exp}
In this section, we assess the proposed model's performance on diverse tabular real-world datasets, encompassing German \cite{asuncion2007uci}, Bail \cite{jordan2015effect}, Credit Defaulter \cite{yeh2009comparisons}, NBA \cite{dai2021say} and Pokec-n \cite{takac2012data}.

\subsection{Datasets}
We used the following datasets.
\begin{enumerate}
    \item \textbf{German \cite{asuncion2007uci}:} 
          The dataset consists of 1000 individual clients from a German bank, and these individuals, acting as nodes, are linked when there is a significant similarity in their credit accounts. The main objective is to evaluate the credit risk level for each individual, emphasizing the sensitive attribute of ``gender''.
    \item \textbf{Bail \cite{jordan2015effect}:} 
          The dataset comprises 18,876 individuals granted bail by US state courts between 1990 and 2009. These individuals, serving as nodes, are linked based on similarities in their demographics and prior criminal histories. The primary goal is to categorize defendants as either on bail or not, with a focus on the sensitive attribute ``race''.
    \item \textbf{Credit Defaulter \cite{yeh2009comparisons}:}
    The dataset includes 30,000 individual credit card holders, where these individuals, acting as nodes, are linked based on high similarity in payment information. The main objective is to predict whether an individual will default on credit card payments, with particular attention to the sensitive attribute of ``age''.
    \item \textbf{NBA \cite{dai2021say}:}
    The dataset includes 400 NBA players with statistics from the 2016–2017 season, along with information such as nationality, age and salary. The players act as nodes in a graph, linked based on their Twitter connections. The main task is to predict whether a player’s salary is above or below the league median, with ``nationality'' (U.S. or non-U.S.) used as a sensitive attribute.
    \item \textbf{Pokec-n \cite{takac2012data}:}  
    The dataset includes 66,569 individual users from a popular online social network in Slovakia, represented as nodes in a graph where edges capture friendship relations between them. The main task is to predict each user’s working field, with particular attention to the sensitive attribute ``region''.
\end{enumerate}

\subsection{Experimental setup}
We compare the proposed framework with state-of-the-art node classification methods, including GCN~\cite{kipf2016semi}, GraphSAGE~\cite{hamilton2017inductive}, and GIN~\cite{xu2019powerful}. We also evaluate its performance against fair node classification approaches, such as FairGNN~\cite{dai2021say}, FairVGNN~\cite{dong2022edits}, FairSAD \cite{zhu2024fair}, and the counterfactual fairness method CAF~\cite{guo2023towards}\footnote{\url{https://github.com/TimeLovercc/CAF-GNN}} and FairGB~\cite{li2024rethinking}.

\paragraph{Splitting strategy} For all datasets, we follow the train/validation/test split protocol from~\cite{guo2023towards}. To ensure robust evaluation, we report average results over five random splits. All methods are tested on the same splits to enable fair and consistent comparisons.

\paragraph{Hyper-parameter tuning}
Each method is tuned independently using a grid search over its respective hyper-parameter space. The configuration yielding the best validation performance is selected. This tuning protocol is applied consistently across all methods. Additionally, we fix the learning rate to 0.01 and use a constant random seed for reproducibility.

For our proposed method, the hyper-parameters are chosen as follows: $K$ and $K'$ from \{2, 5, 10\}; $\alpha$ from \{0.2, 0.5, 0.9, 5, 10\}; $\beta$ from \{0.5, 1\}; $\gamma$ from \{0.02, 0.1, 1\}; $\omega$ from \{0.03, 0.09, 0.3, 0.7, 1\}; and $\eta$ from \{0.06, 0.07, 0.08, 0.09, 0.1, 0.3, 0.8\}.

To select the best epoch during training, we compute the following validation score:
\begin{equation}
\text{Score} = \text{BACC} + \frac{1}{2} \left[ (100 - \Delta_{EO}) + (100 - \Delta_{SP}) \right]
\end{equation}
This score balances classification accuracy and fairness, with the epoch achieving the highest score selected. Training is capped at 100 epochs.

\subsection{Method comparison}
We assess the model's performance (HSCCAF). Tables \ref{tab:German}, \ref{tab:Bail}, \ref{tab:Credit} and  \ref{tab:NBA} present the Balanced Accuracy (BACC), AUC and F1 score as percentages, along with their standard deviations across five random splits, providing insights into node classification proficiency. In terms of fairness evaluation, guided by \cite{10.1145/3437963.3441752}, we employ two widely recognized group fairness metrics specifically, statistical parity ($\Delta_{SP}$) and equality opportunity ($\Delta_{EO}$). These metrics are also reported as percentages. Furthermore, these tables include a relative comparison between our proposed method (HSCCAF) and the baseline methods, where a positive value indicates that HSCCAF outperforms the baseline, and a negative value indicates lower performance.

\subsection{Effect of the proposed HSCCAF for fair classification task}
We evaluate HSCCAF against vanilla GNN baselines (GCN, SAGE, GIN), fairness-aware methods (FairGNN, FairVGNN, CAF FairGB and FairSAD), and our foundational method (HSCCAF) on five benchmark datasets. The results, reported in Tables~\ref{tab:German}--\ref{tab:pocke}, reflect a consistent trade-off between predictive performance and fairness metrics. 

On the German dataset (Table~\ref{tab:German}), HSCCAF(SAGE) achieves the best fairness performance among all methods, with $\Delta_{SP}=2.86$ and $\Delta_{EO}=3.39$. This represents a significant improvement over both vanilla GNNs and other fairness-aware approaches. In terms of classification performance, HSCCAF is competitive, achieving a balanced accuracy of 60.02\% and an AUC of 63.74\%, close to FairGB. Notably, while FairGB performs slightly better in terms of AUC and BACC, it does so at the cost of a much larger fairness gap. This result illustrates HSCCAF’s ability to mitigate unfairness while preserving predictive performance.

\begin{table}[h]
    \centering
    \caption{Evaluation of Node Classification and Fairness Performance on the German Dataset.}\label{tab:German}
    \resizebox{0.80\textwidth}{!}{%
    \begin{tabular}{lccccc}
        \toprule
        \textbf{Method} & \textbf{BACC($\uparrow$)} & \textbf{AUC($\uparrow$)} & \textbf{F1($\uparrow$)} & \boldmath$\Delta_{SP}(\downarrow)$ & \boldmath$\Delta_{EO}(\downarrow)$ \\
        \midrule
        \textbf{GCN} & 57.71(6.14) & 60.78(8.35) & \textbf{73.25(6.90)} & \uline{3.71(1.87)} & 5.76(3.17)\\
        
        \textbf{SAGE} & \uline{60.93(6.25)} & \uline{65.84(4.58)} & 68.51(10.19) & 7.08(4.87) & 6.31(5.46) \\
        
        \textbf{GIN} & 57.52(2.78) & 59.58(4.19) & 59.10(16.16) & 9.59(5.98) & 10.14(4.92) \\
        
        \textbf{FairGNN(GCN)} & 59.71(5.46) & \uline{64.73(3.75)} & 65.57(5.79) & 9.39(3.08) & 7.97(4.58) \\
        
        \textbf{FairVGNN(GCN)} &  \uline{59.79(4.37)} & 63.97(3.43) & 66.45(6.49) & 8.98(4.28) & 10.93(4.44) \\
        
        \textbf{CAF(SAGE)} &  57.58(5.36) & 61.87(8.13) & 62.96(9.18) & 4.46(3.06) & 6.15(0.85)  \\
        
        \textbf{FairGB(SAGE)} &  \textbf{61.49(3.44)} & \textbf{66.48(4.08)} & \uline{68.81(7.14)} & 7.46(7.00) & 7.69 (5.88)  \\
        
        \textbf{FairSAD(GCN)} & 57.66(3.98) & 61.15(3.14) & 63.67(9.26) & 4.61(2.78) & \uline{5.39(2.70)}  \\
        
        \textbf{HSCCAF(SAGE)} & 60.02(2.93) & 63.74(3.87) & 65.50(4.70) & \textbf{2.86(2.20)} & \textbf{3.39(1.05)} \\
        \midrule
        \textbf{Relative $\Delta\%$ (HSCCAF vs. GCN)} & +4.01\% & +4.87\% & -10.60\% & +22.91\% & +41.14\% \\
        
        \textbf{Relative $\Delta\%$ (HSCCAF vs. SAGE)} & -1.49\% & -3.18\% & -4.39\% & +59.60\% & +46.27\% \\
        
        \textbf{Relative $\Delta\%$ (HSCCAF vs. GIN)} & +4.35\% & +6.98\% & +10.83\% & +70.17\% & +66.56\% \\
        
        \textbf{Relative $\Delta\%$ (HSCCAF vs. FairGNN)} & +0.52\% & -1.53\% & -0.11\% & +69.55\% & +57.46\% \\
        
        \textbf{Relative $\Delta\%$ (HSCCAF vs. FairVGNN)} & +0.38\% & -0.36\% & -1.43\% & +68.15\% & +69.00\% \\
        
        \textbf{Relative $\Delta\%$ (HSCCAF vs. CAF)} & +4.24\% & +3.02\% & +4.03\% & +35.87\% & +44.88\% \\
        
        \textbf{Relative $\Delta\%$ (HSCCAF vs. FairGB)} & -2.39\% & -4.12\% & -4.81\% & +61.65\% & +55.90\% \\
        
        \textbf{Relative $\Delta\%$ (HSCCAF vs. FairSAD)} & +4.09\% & +4.23\% & +2.87\% & +37.96\% & +37.10\% \\
        \bottomrule
    \end{tabular}
    }
\end{table}

On the Bail dataset (Table~\ref{tab:Bail}), HSCCAF(SAGE) achieves the strongest fairness performance, with $\Delta_{SP}=5.64$ and $\Delta_{EO}=1.55$, substantially reducing disparities compared to other methods. Although CAF(SAGE) attains the best classification results, with the highest balanced accuracy (85.58\%), AUC (91.84\%), and F1-score (82.16), it does so at the cost of larger fairness gaps. In contrast, HSCCAF strikes a favorable balance by maintaining competitive predictive performance (BACC = 84.67\%, AUC = 90.49\%, F1 = 81.03\%) while significantly improving fairness. These findings highlight HSCCAF’s ability to mitigate unfairness without severely sacrificing accuracy, offering a more balanced trade-off than both fairness-aware baselines and standard GNN models.

\begin{table}[h]
    \centering
    \caption{Evaluation of Node Classification and Fairness Performance on the Bail Dataset.}\label{tab:Bail}
    \resizebox{0.80\textwidth}{!}{%
    \begin{tabular}{lccccc}
        \toprule
        \textbf{Method} & \textbf{BACC($\uparrow$)} & \textbf{AUC($\uparrow$)} & \textbf{F1($\uparrow$)} & \boldmath$\Delta_{SP}(\downarrow)$ & \boldmath$\Delta_{EO}(\downarrow)$ \\
        \midrule
        \textbf{GCN} & 83.46(0.81) & 89.19(0.97) & 79.60(1.14) & 9.10(2.77) & 5.45(2.98)\\
        
        \textbf{SAGE} & 81.76(2.63) & 88.46(2.19) & 77.26(3.35) & 6.68(5.26) & 5.51(3.34) \\
        
        \textbf{GIN} & 78.80(1.33) & 85.57(1.39) & 73.51(1.68) & 9.35(4.85) & 5.93(4.85) \\
        
        \textbf{FairGNN(GCN)} & 78.31(1.78) & 80.22(4.99) & 67.80(4.14) & \textbf{3.30(2.24)} & 3.62(1.35) \\
        
        \textbf{FairVGNN(GCN)} & 78.81(2.45) & 85.79(1.76) & 73.76(2.93) & 4.86(1.48) & 2.72(1.33) \\
        
        \textbf{CAF(SAGE)} & \textbf{85.58(0.77)} & \textbf{91.84(0.87)} & \textbf{82.16(1.02)} & 6.39(2.52) & \uline{2.06(1.81)} \\
        
         \textbf{FairGB(SAGE)} & 81.90(2.28) & 88.88(1.41) & 77.56(2.77) & \uline{4.16(5.87)} & 3.60(4.17) \\
        
        \textbf{FairSAD(GCN)} & 79.92(5.13) & 85.98(4.98) & 75.15(6.64) & 4.52(2.99) & 3.05(0.50) \\
        
        \textbf{HSCCAF(SAGE)} & \uline{84.67(1.50)} & \uline{90.49(0.90)} & \uline{81.03(1.99)} & 5.64(1.89) &  \textbf{1.55(1.66)}\\
        \midrule
        \textbf{Relative $\Delta\%$ (HSCCAF vs. GCN)} & +1.45\% & +1.46\% & +1.80\% & +38.02\% & +71.56\% \\
        
        \textbf{Relative $\Delta\%$ (HSCCAF vs. SAGE)} & +3.56\% & +2.29\% & +4.87\% & +15.57\% & +71.87\% \\
        
        \textbf{Relative $\Delta\%$ (HSCCAF vs. GIN)} & +7.45\% & +5.74\% & +10.22\% & +39.67\% & +73.86\% \\
        
        \textbf{Relative $\Delta\%$ (HSCCAF vs. FairGNN)} & +8.12\% & +12.8\% & +19.51\% & -70.90\% & +57.18\% \\
        
        \textbf{Relative $\Delta\%$ (HSCCAF vs. FairVGNN)} & +7.43\% & +5.47\% & +9.85\% & -16.04\% & +43.01\% \\
        
        \textbf{Relative $\Delta\%$ (HSCCAF vs. CAF)} & -1.06\% & -1.47\% & -1.37\% & +11.73\% & +24.75\% \\
        
        \textbf{Relative $\Delta\%$ (HSCCAF vs. FairGB)} & +3.38\% & +1.81\% & +4.47\% & -35.58\% & +56.94\% \\
        
         \textbf{Relative $\Delta\%$ (HSCCAF vs. FairSAD)} & +5.94\% & +5.24\% & +7.82\% & -24.77\% & +49.18\% \\
        \bottomrule
    \end{tabular}
    }
\end{table}

On the Credit dataset (Table~\ref{tab:Credit}), HSCCAF again achieves the lowest values for both fairness metrics ($\Delta_{SP}=2.93$, $\Delta_{EO}=2.05$), outperforming CAF and all other baselines by a large margin in terms of fairness. Although it does not attain the highest classification scores (FairGB leads slightly), HSCCAF remains competitive, with a balanced accuracy of 69.10\% and an AUC of 73.41\% and F1 of 85.38\%. These results confirm that HSCCAF successfully reduces bias without substantially compromising predictive accuracy.

\begin{table}[h]
    \centering
    \caption{Evaluation of Node Classification and Fairness Performance on the Credit Dataset.}\label{tab:Credit}
    \resizebox{0.80\textwidth}{!}{%
    \begin{tabular}{lccccc}
        \toprule
        \textbf{Method} & \textbf{BACC($\uparrow$)} & \textbf{AUC($\uparrow$)} & \textbf{F1($\uparrow$)} & \boldmath$\Delta_{SP}(\downarrow)$ & \boldmath$\Delta_{EO}(\downarrow)$ \\
        \midrule
        \textbf{GCN} & 64.81(0.33) & 70.13(0.28) & 79.18(1.57) & 7.49(3.70) & 6.35(3.80)\\
       
        \textbf{SAGE} & \uline{69.54(0.72)} & \textbf{75.57(0.32)} & 83.04(3.75) & 6.36(3.34) & 4.86(3.58) \\
       
        \textbf{GIN} & 65.83(0.20) & 70.59(0.48) & 79.13(2.10) & 10.70(4.13) & 9.93(4.04) \\
       
        \textbf{FairGNN(GCN)} & 64.14(0.45) & 70.28(0.12) & 77.19(1.22) & 10.80(3.16) & 9.94(3.32) \\
       
        \textbf{FairVGNN(GCN)} & 65.56(0.13) & 70.20(0.23) & \textbf{85.97(0.16)} & 8.90(0.88) & 7.01(0.89) \\
       
        \textbf{CAF(SAGE)} & 67.35(2.90) &73.45(2.29) & 75.46(6.84) & 4.95(2.71) & 4.65(2.45) \\
        
        \textbf{FairGB(SAGE)} & \textbf{69.58(0.40)} & \uline{73.53(0.56)} & 82.84(1.56) & 3.81(3.27) & 2.83(2.94) \\
        
        \textbf{FairSAD(GCN)} & 63.21(0.41) & 67.60(0.82) & 78.09(0.99) & \uline{3.42(2.04)} & \uline{2.72(2.25)} \\
        
        \textbf{HSCCAF(SAGE)} & 69.10(0.34) & 73.41(1.17) & \uline{85.38(0.59)} & \textbf{2.93(1.35)} & \textbf{2.05(1.08)} \\
        \midrule
       \textbf{Relative $\Delta\%$ (HSCCAF vs. GCN)} & +6.62\% & +4.67\% & +7.83\% & +60.88\% & +67.71\% \\
       
        \textbf{Relative $\Delta\%$ (HSCCAF vs. SAGE)} & +0.63\% & -2.86\% & +2.81\% & +53.93\% & +57.81\% \\
        
        \textbf{Relative $\Delta\%$ (HSCCAF vs. GIN)} & +4.96\% & +3.99\% & +7.90\% & +72.61\% & +79.35\% \\
        
        \textbf{Relative $\Delta\%$ (HSCCAF vs. FairGNN)} & +7.73\% & +4.45\% & +10.61\% & +72.87\% & +79.37\% \\
        
        \textbf{Relative $\Delta\%$ (HSCCAF vs. FairVGNN)} & +5.40\% & +4.57\% & -0.68\% & +67.07\% & +70.75\% \\
        
        \textbf{Relative $\Delta\%$ (HSCCAF vs. CAF)} & +2.60\% & +0.05\% & +13.14\% & +40.80\% & +55.91\% \\
        
        \textbf{Relative $\Delta\%$ (HSCCAF vs. FairGB)} & -0.69\% & +0.16\% & +3.06\% & +23.09\% & +27.56\% \\
        
        \textbf{Relative $\Delta\%$ (HSCCAF vs. FairSAD)} & +9.31\% & +8.59\% & +9.33\% & +14.32\% & +24.63\% \\
        \bottomrule
    \end{tabular}
    }
\end{table}

On the NBA dataset (Table~\ref{tab:NBA}), HSCCAF (SAGE) delivers the best fairness outcomes, with $\Delta_{SP}=5.16$ and $\Delta_{EO}=4.65$, markedly reducing bias compared to other methods. In terms of predictive performance, CAF(SAGE) achieves the highest balanced accuracy (67.89\%), while FairVGNN(GCN) yields the strongest F1-score (85.97). However, both methods exhibit considerably larger fairness gaps. By contrast, HSCCAF provides a more balanced trade-off, maintaining solid classification performance (BACC = 67.59\%, AUC = 74.64\%, F1 = 74.48\%) while substantially improving fairness. These results demonstrate HSCCAF’s effectiveness in mitigating unfairness while preserving reasonable predictive accuracy on this challenging dataset.

\begin{table}[h]
    \centering
    \caption{Evaluation of Node Classification and Fairness Performance on the NBA Dataset.}\label{tab:NBA}
    \resizebox{0.80\textwidth}{!}{%
    \begin{tabular}{lccccc}
        \toprule
        \textbf{Method} & \textbf{BACC($\uparrow$)} & \textbf{AUC($\uparrow$)} & \textbf{F1($\uparrow$)} & \boldmath$\Delta_{SP}(\downarrow)$ & \boldmath$\Delta_{EO}(\downarrow)$ \\
        \midrule
        \textbf{GCN} & 65.22(5.87) & 70.56(5.94) & 67.78(5.73) & \uline{6.13(2.63)} & 5.75(8.94)\\
        
        \textbf{SAGE} & 65.76(4.69) &74.58(3.45) & 67.51(3.63) & 9.20(9.46) & 12.86(5.70) \\
        
        \textbf{GIN} & 65.83(5.42) & \textbf{75.46(5.60)} & 65.47(6.54) & 13.99(10.40) & 19.99(13.30)\\
        
        \textbf{FairGNN(GCN)} & 62.35(6.26) & 70.63(6.25) & 67.12(7.01) & 8.09(4.68) & 8.76(4.56) \\
        
        \textbf{FairVGNN(GCN)} & 65.56(0.13) & 70.20(0.23) & \textbf{85.97(0.16)} & 8.09(4.68) & 8.76(4.56) \\
        
        \textbf{CAF(SAGE)} & \textbf{67.89(3.95)} & 73.55(5.86) & 66.87(4.92) & 8.51(6.40) & \textbf{4.34(4.38)} \\
        
        \textbf{FairGB(SAGE)} &  67.02(3.23) & 72.68(2.76) & 69.01(2.98) & 8.00(7.10) & 10.10(7.30) \\
        
         \textbf{FairSAD(GCN)} &  61.62(6.32) & 68.41(4.22) & 71.51(2.66) & 6.56(4.86) & 6.48(3.75) \\
        
        \textbf{HSCCAF(SAGE)} & \uline{67.59(2.17)} &  \uline{74.64(1.02)} & \uline{74.48(7.18)} & \textbf{5.16(3.88)} & \uline{4.65(3.81)} \\
        \midrule
       \textbf{Relative $\Delta\%$ (HSCCAF vs. GCN)} & +3.63\% & +5.78\% & +9.89\% & +15.82\% & +19.13\% \\
       
        \textbf{Relative $\Delta\%$ (HSCCAF vs. SAGE)} & +2.78\% & +0.09\% & +10.32\% & +43.91\% & +63.84\% \\
        
        \textbf{Relative $\Delta\%$ (HSCCAF vs. GIN)} & +2.67\% & -1.08\% & +13.76\% & +63.11\% & +76.73\% \\
        
        \textbf{Relative $\Delta\%$ (HSCCAF vs. FairGNN)} & +8.40\% & +5.67\% & +10.96\% & +36.21\% & +46.91\% \\
        
        \textbf{Relative $\Delta\%$ (HSCCAF vs. FairVGNN)} & +3.09\% & +6.32\% & -13.36\% & +36.21\% & +46.91\% \\
        
        \textbf{Relative $\Delta\%$ (HSCCAF vs. CAF)} & -0.44\% & +1.48\% & +11.36\% & +39.36\% & -7.14\% \\
        
        \textbf{Relative $\Delta\%$ (HSCCAF vs. FairGB)} & +0.85\% & +2.69\% & +7.92\% & +35.50\% & +53.96\% \\
        
        \textbf{Relative $\Delta\%$ (HSCCAF vs. FairSAD)} & +9.70\% & +9.06\% & +4.15\% & +21.34\% & +28.24\% \\
        \bottomrule
    \end{tabular}
    }
\end{table}

On the Pokec-n dataset (Table~V), HSCCAF (SAGE) achieves the strongest fairness performance among all methods, with $\Delta_{SP} = 1.31$ and $\Delta_{EO} = 1.71$. This represents a significant improvement over both vanilla GNNs and other fairness-aware approaches. At the same time, HSCCAF maintains reasonable classification performance, achieving a BACC of 64.33\% and an AUC of 67.64\%. Although FairGNN (GCN) attains the best classification results, with the highest balanced accuracy (66.72\%), AUC (71.42\%), and F1-score (63.20\%), it does so at the cost of larger fairness gaps. These results confirm that HSCCAF effectively reduces bias while preserving reasonable predictive performance.

\begin{table}[h]
    \centering
    \caption{Evaluation of Node Classification and Fairness Performance on the Pokec-n Dataset.}\label{tab:pocke}
    \resizebox{0.80\textwidth}{!}{%
    \begin{tabular}{lccccc}
        \toprule
        \textbf{Method} & \textbf{BACC($\uparrow$)} & \textbf{AUC($\uparrow$)} & \textbf{F1($\uparrow$)} & \boldmath$\Delta_{SP}(\downarrow)$ & \boldmath$\Delta_{EO}(\downarrow)$ \\
        \midrule
        \textbf{GCN} & 64.65(1.24) & 69.21(1.40) & 61.46(3.26) & 1.85(1.01) & 2.37(1.58)\\
        
        \textbf{SAGE} & 64.36(2.44) & 69.44(2.31) & 60.50(4.27) & \uline{1.56(0.96)} & 2.24(1.95) \\
        
        \textbf{GIN} & 65.75(1.54) &  69.89(1.35) & 59.34(1.53) & 1.92(0.96) & 2.44(1.66)\\
        
        \textbf{FairGNN(GCN)} & \textbf{66.72(1.46)} & \textbf{71.42(0.47)} & \textbf{63.20(1.63)} & 3.38(2.24) & 5.59(1.88) \\
        
        \textbf{FairVGNN(GCN)} & \uline{66.38(1.21)} & \uline{70.17(0.45)} & 58.91(3.91) & 4.23(1.42) & 5.44(2.51) \\
        
        \textbf{CAF(SAGE)} & 62.43(1.65) & 65.76(1.80) & \uline{61.46(3.22)} & 1.64(0.53) & \uline{1.78(1.81)} \\
        
        \textbf{FairGB(SAGE)} &  65.08(2.45) & 70.11(1.60) & 61.15(5.91) & 2.13(0.60) & 2.76(1.01) \\
        
        \textbf{FairSAD(GCN)} &  64.77(1.85) & 68.78(1.61) & 59.40(3.28) & 2.94(1.93) & 4.57(3.07) \\
        
        \textbf{HSCCAF(SAGE)} & 64.33(0.72) & 67.64(1.14) & 59.28(1.90) & \textbf{1.31(0.44)} & \textbf{1.71(0.56)} \\
        \midrule
       \textbf{Relative $\Delta\%$ (HSCCAF vs. GCN)} & -0.49\% & -2.26\% & -3.54\% & +29.18\% & +27.84\% \\
       
        \textbf{Relative $\Delta\%$ (HSCCAF vs. SAGE)} & -0.04\% & -2.59\% & -2.01\% & +16.02\% & +23.66\% \\
        
        \textbf{Relative $\Delta\%$ (HSCCAF vs. GIN)} & -2.16\% & -3.22\% & -0.10\% & +31.77\% & +29.91\% \\
        
        \textbf{Relative $\Delta\%$ (HSCCAF vs. FairGNN)} & -3.58\% & -5.29\% & -6.20\% & +61.24\% & +69.40\% \\
        
        \textbf{Relative $\Delta\%$ (HSCCAF vs. FairVGNN)} & -3.09\% & -3.61\% & +0.63\% & +69.03\% & +68.56\% \\
        
        \textbf{Relative $\Delta\%$ (HSCCAF vs. CAF)} & +3.05\% & +2.86\% & -3.54\% & +20.12\% & +3.93\% \\
        
        \textbf{Relative $\Delta\%$ (HSCCAF vs. FairGB)} & -1.15\% & -3.52\% & -3.07\% & +38.49\% & +38.04\% \\
        
        \textbf{Relative $\Delta\%$ (HSCCAF vs. FairSAD)} & -0.67\% & -1.65\% & -0.20\% & +55.44\% & +62.58\% \\
        \bottomrule
    \end{tabular}
    }
\end{table}

In summary, HSCCAF demonstrates robust and consistent improvements in fairness across all datasets, while maintaining classification performance close to the best models. Compared to CAF, on which it builds, HSCCAF achieves lower fairness gaps in all scenarios. These results validate the contribution of HSCCAF’s design, which introduces fairness-enhancing mechanisms without degrading performance. The overall trend highlights that our approach leads to a better fairness-accuracy trade-off than both standard GNNs and existing fairness-aware baselines.
\subsection{Ablation Study}
In this section, we investigate the influence of graph editing, supervised contrastive loss, and environmental loss separately.
\paragraph{Effect of the graph editing}  
We investigate the influence of graph editing by comparing the graph homophily ratio with respect to the class label and the sensitive attribute label in the original and edited graphs. We expect the homophily ratio with respect to the class labels in the original graph to be higher than in the edited graph, while the homophily ratio with respect to the sensitive attribute label in the original graph is expected to be lower than in the edited graph. As shown in Table~\ref{tab:homophily}, this observation holds in all datasets.
\begin{table}[htbp]
\centering
\caption{Homophily ratios with respect to sensitive attributes and class labels before and after graph editing.}
\label{tab:homophily}
 \resizebox{0.55\textwidth}{!}{%
\begin{tabular}{lcccc}
\toprule
\textbf{Dataset} & $\boldsymbol{hr}_{s}(\text{original})$ & $\boldsymbol{hr}_{s}(\text{edited})$ & $\boldsymbol{hr}_{c}(\text{original})$ & $\boldsymbol{hr}_{c}(\text{edited})$ \\
\midrule
\textbf{German} & 0.80 & 0.74 ($\downarrow$) & 0.59 & 0.62 ($\uparrow$) \\
\textbf{Bail}   & 0.53 & 0.49 ($\downarrow$) & 0.78 & 0.79 ($\uparrow$) \\
\textbf{Credit} & 0.89   & 0.86 ($\downarrow$)  & 0.67   & 0.69 ($\uparrow$)  \\   
\textbf{NBA}   & 0.72 & 0.59 ($\downarrow$) & 0.40 & 0.48 ($\uparrow$) \\
\textbf{Pokec-n}   & 0.95 & 0.93 ($\downarrow$) & 0.75 & 0.76 ($\uparrow$) \\
\bottomrule
\end{tabular}
}
\end{table}

 Additionally, we compare the results of HSCCAF with and without the Fairness-Aware Graph Editing phase across all datasets. As shown in Table~\ref{tab:HSCCAF}, the results without this phase are consistently worse than those with it, except for the bail dataset. In the bail dataset, as shown in Table~\ref{tab:homophily}, the homophily ratio with respect to the sensitive attribute label is 0.53\%, indicating that there is little topology bias in the graph structure. Consequently, the results with and without the Fairness-Aware Graph Editing phase are similar, with HSCCAF showing only a slight improvement in the equality of opportunity fairness metric. This demonstrates the effectiveness of the Fairness-Aware Graph Editing phase in balancing fairness and accuracy when there is a topology bias in the graph structure. We also apply the Fairness-Aware Graph Editing phase to the CAF method to investigate its effect, as shown in Table~\ref{tab:HSCCAF_CAF}. Although this phase helps reduce topology bias in the graph structure, applying it to CAF alone does not achieve the best reported results. As shown in Table~\ref{tab:HSCCAF_CAF}, while there are some improvements in certain datasets, the performance is still not as good as HSCCAF. This highlights the importance of the additional losses introduced in HSCCAF, namely the supervised contrastive loss and the environmental loss. HSCCAF not only addresses the challenges tackled by CAF but also reduces topology bias in the graph structure, which CAF does not handle.

\begin{table}[h]
    \centering
    \caption{Evaluation of Node Classification and Fairness Performance of HSCCAF and HSCCAF(Without G.E.) (without Fairness-Aware Graph Editing phase)}
    \label{tab:HSCCAF}
    \resizebox{0.75\textwidth}{!}{%
    \begin{tabular}{lccccc}
        \toprule
        \textbf{Method} & \textbf{BACC($\uparrow$)} & \textbf{AUC($\uparrow$)} & \textbf{F1($\uparrow$)} & \boldmath$\Delta_{SP}(\downarrow)$ & \boldmath$\Delta_{EO}(\downarrow)$ \\
        \midrule
        \multicolumn{6}{c}{\textbf{German}} \\ 
        \midrule
        \textbf{HSCCAF(Without G.E.)} & 57.64(4.53) & 62.56(4.60) & 61.51(5.37) & 5.35(2.58) & 8.81(1.88) \\
        \textbf{HSCCAF(With G.E.)} & \textbf{60.02(2.93)} & \textbf{63.74(3.87)} & \textbf{65.50(4.70)} & \textbf{2.86(2.20)} & \textbf{3.39(1.05)} \\
        \midrule
        \multicolumn{6}{c}{\textbf{Bail}} \\
        \midrule
        \textbf{HSCCAF(Without G.E.)} & \textbf{84.78(0.94)} & \textbf{91.14(0.89)} & 81.00(1.17) & \textbf{5.13(2.13)} & 1.95(0.96) \\
        \textbf{HSCCAF(With G.E.)} & 84.67(1.50) & 90.49(0.90) & \textbf{81.03(1.99)} & 5.64(1.89) & \textbf{1.55(1.66)} \\
        \midrule
        \multicolumn{6}{c}{\textbf{Credit}} \\
        \midrule
        \textbf{HSCCAF(Without G.E.)} & 69.03(0.61) & \textbf{74.70(0.53)} & 81.98(2.76) & 6.91(3.24) & 6.07(3.71) \\
        \textbf{HSCCAF(With G.E.)} & \textbf{69.10(0.34)} & 73.41(1.17) & \textbf{85.38(0.59)} & \textbf{2.93(1.35)} & \textbf{2.05(1.08)} \\
      
        \midrule
        \multicolumn{6}{c}{\textbf{NBA}} \\
        \midrule
        \textbf{HSCCAF(Without G.E.)} & 67.04(4.31) & 72.21(5.66) & 68.00(5.42) & 5.33(4.74) & 8.24(7.30) \\
        \textbf{HSCCAF(With G.E.)} & \textbf{67.59(2.17)} & \textbf{74.64(1.02)} & \textbf{74.48(7.18)} & \textbf{5.16(3.88)} & \textbf{4.65(3.81)} \\

        \midrule
        \multicolumn{6}{c}{\textbf{Pokec-n}} \\
        \midrule
        \textbf{HSCCAF(Without G.E.)} & 63.11(1.04) & 66.54(3.41) & \textbf{61.53(2.42)} & 1.52(0.67) & 1.93(1.07) \\
        \textbf{HSCCAF(With G.E.)} & \textbf{64.33(0.72)} & \textbf{67.64(1.14)} & 59.28(1.90) & \textbf{1.31(0.44))} & \textbf{1.71(0.56)} \\
        
        \bottomrule
    \end{tabular}
    }
\end{table}

\begin{table}[h]
    \centering
    \caption{Evaluation of Node Classification and Fairness Performance of CAF \cite{guo2023towards} and CAF (With G.E.) (with Fairness-Aware Graph Editing phase)}
    \label{tab:HSCCAF_CAF}
    \resizebox{0.65\textwidth}{!}{%
    \begin{tabular}{lccccc}
        \toprule
        \textbf{Method} & \textbf{BACC($\uparrow$)} & \textbf{AUC($\uparrow$)} & \textbf{F1($\uparrow$)} & \boldmath$\Delta_{SP}(\downarrow)$ & \boldmath$\Delta_{EO}(\downarrow)$ \\
        \midrule
        \multicolumn{6}{c}{\textbf{German}} \\ 
        \midrule
         \textbf{CAF \cite{guo2023towards}} & 57.58(5.36) & 61.87(8.13) & 62.96(9.18) & \textbf{4.46(3.06)} & 6.15(0.85) \\
 \textbf{CAF(With G.E.)} & \textbf{59.70(2.81)} & \textbf{63.74(4.31)} & \textbf{66.31(4.56)} & 6.84(3.06) & \textbf{5.02(3.70)} \\
        \midrule
        \multicolumn{6}{c}{\textbf{Bail}} \\
        \midrule
        \textbf{CAF \cite{guo2023towards}} & \textbf{85.58(0.77)} & \textbf{91.84(0.87)} & \textbf{82.16(1.02)} & 6.39(2.52) & 2.06(1.81) \\
 \textbf{CAF(With G.E.)} & 84.77(1.47) & 90.87(0.82) & 81.23(1.96) & \textbf{5.54(2.27)} & \textbf{1.84(1.89)} \\
        \midrule
        \multicolumn{6}{c}{\textbf{Credit}} \\
        \midrule
   \textbf{CAF \cite{guo2023towards}} & 67.35(2.90) & 73.45(2.29) & 75.46(6.84) & 4.95(2.71) & 4.65(2.45) \\
 \textbf{CAF(With G.E.)} & \textbf{67.60(2.55)} & \textbf{73.82(2.15)} & \textbf{77.26(7.17)} & \textbf{3.55(2.86)} & \textbf{3.11(2.55)} \\
        \midrule
        \multicolumn{6}{c}{\textbf{NBA}} \\
        \midrule
       \textbf{CAF \cite{guo2023towards}} & 67.89(3.95) & \textbf{73.55(5.86)} & 66.87(4.92) & 8.51(6.40) & \textbf{4.34(4.38)} \\
 \textbf{CAF(With G.E.)} & \textbf{68.26(6.13)} & 71.72(6.42) & \textbf{70.94(4.91)} & \textbf{5.24(2.68)} & 8.30(5.09) \\

  \midrule
        \multicolumn{6}{c}{\textbf{Pokec-n}} \\
        \midrule
        \textbf{CAF \cite{guo2023towards}} & 62.43(1.65) & 65.76(1.80) & \textbf{61.46(3.22)} & 1.64(0.53) & \textbf{1.78(1.81)} \\
        \textbf{CAF(With G.E.)} & \textbf{63.12(1.47)} & \textbf{66.22(0.97)} & 60.31(3.75) & \textbf{1.46(0.81))} & 1.92(0.93) \\
        \bottomrule
    \end{tabular}
    }
\end{table}

\paragraph{Effect of the Contrastive Component}

We investigate the influence of the hyperparameter $\omega$, figures \ref{fig:omega}, which controls the relative importance of the content-based component in HSCCAF’s loss function. For the \textbf{German} dataset, we observe that increasing $\omega$ leads to improved fairness: both the demographic parity gap ($\Delta_{SP}$) and the equal opportunity gap ($\Delta_{EO}$) decrease as the content term gains more weight. This is a noteworthy outcome, as the content term was primarily introduced to enhance representation learning independently of group structure, rather than explicitly target fairness. Interestingly, this increase in fairness is also accompanied by gains in classification performance, with balanced accuracy, AUC, and F1-score all improving as $\omega$ increases. This suggests that the content term not only preserves, but can even enhance predictive utility while mitigating bias.

On the \textbf{Bail} dataset, the effect of $\omega$ is more nuanced. We again observe that higher values of $\omega$ contribute positively to fairness, though the improvement is more modest than in the German dataset. Predictive performance metrics remain stable for different values of $\omega$, indicating that emphasizing content does not degrade utility. In summary, the term centered on content controlled by $\omega$ appears to act as a stabilizing factor, helping to maintain classification performance while encouraging fairer representations.

\begin{figure*}[ht]
\centering
\begin{subfigure}[t]{0.40\textwidth}
\includegraphics[width=\textwidth]
{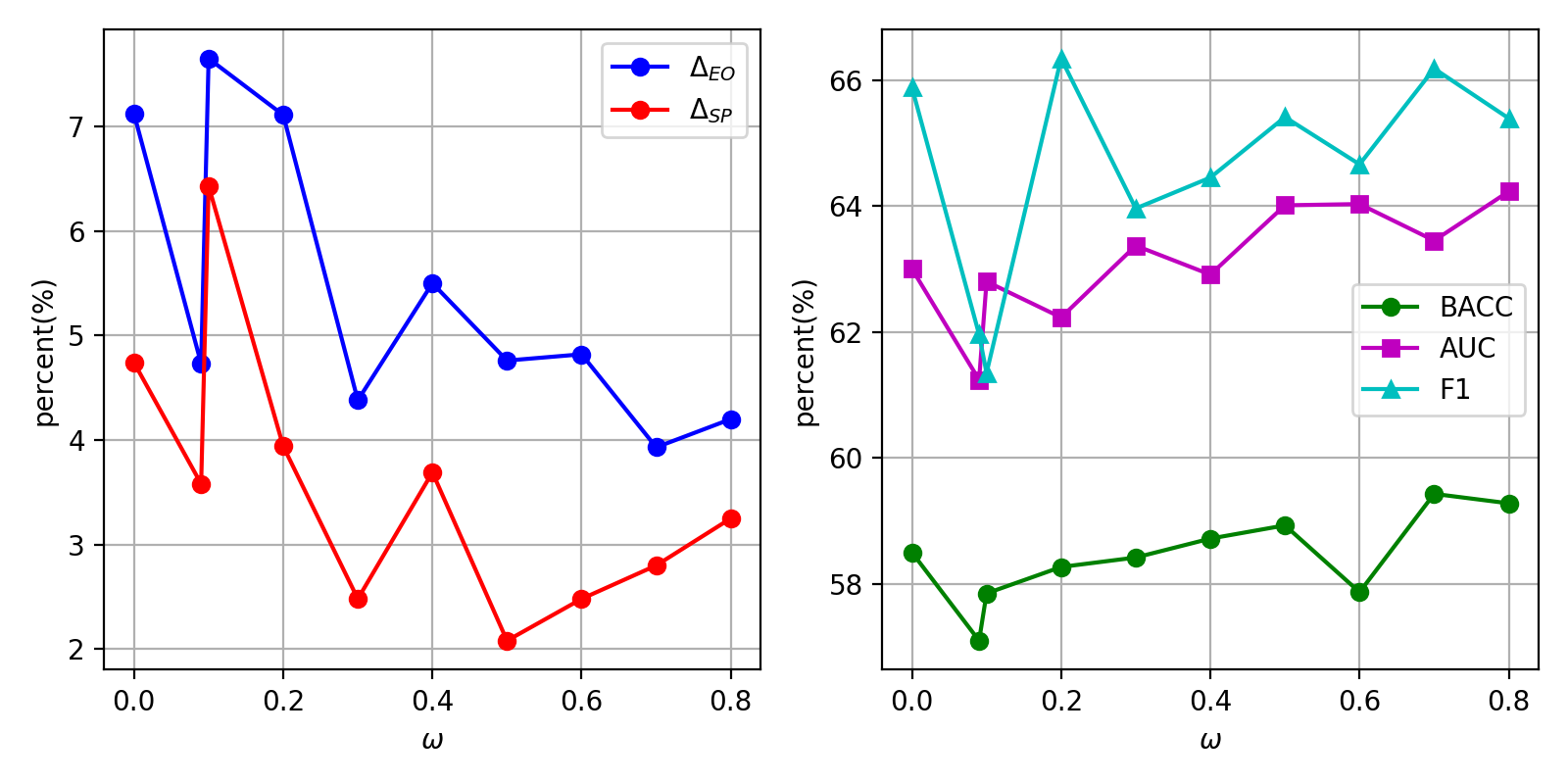}
\end{subfigure}
\begin{subfigure}[t]{0.40\textwidth}
\includegraphics[width=\textwidth]{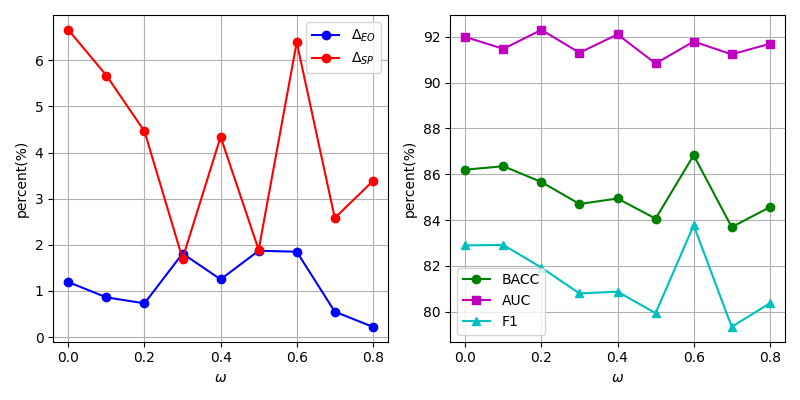}
\end{subfigure}
\caption{Hyper-parameter \(\omega\) study on the German (two plots on the right) and Bail (two plots on the left) datasets.}
\label{fig:omega}
\end{figure*}

\paragraph{Effect of the environment component}

We now turn to the study of $\eta$,  which determines the weight of the environmental loss in the total loss. Figure \ref{fig:eta} shows the utility and fairness performance as a function of $\eta$ while other parameters are fixed (with $\alpha = 10$, $\beta = 1$, $\gamma = 1$, and $\omega = 0$), using the test data from the German dataset. We notice that the impact of $\eta$ is less smooth to analyse than for $\omega$.

\begin{figure*}[ht]
\centering
\begin{subfigure}[t]{0.40\textwidth}
\includegraphics[width=\textwidth]{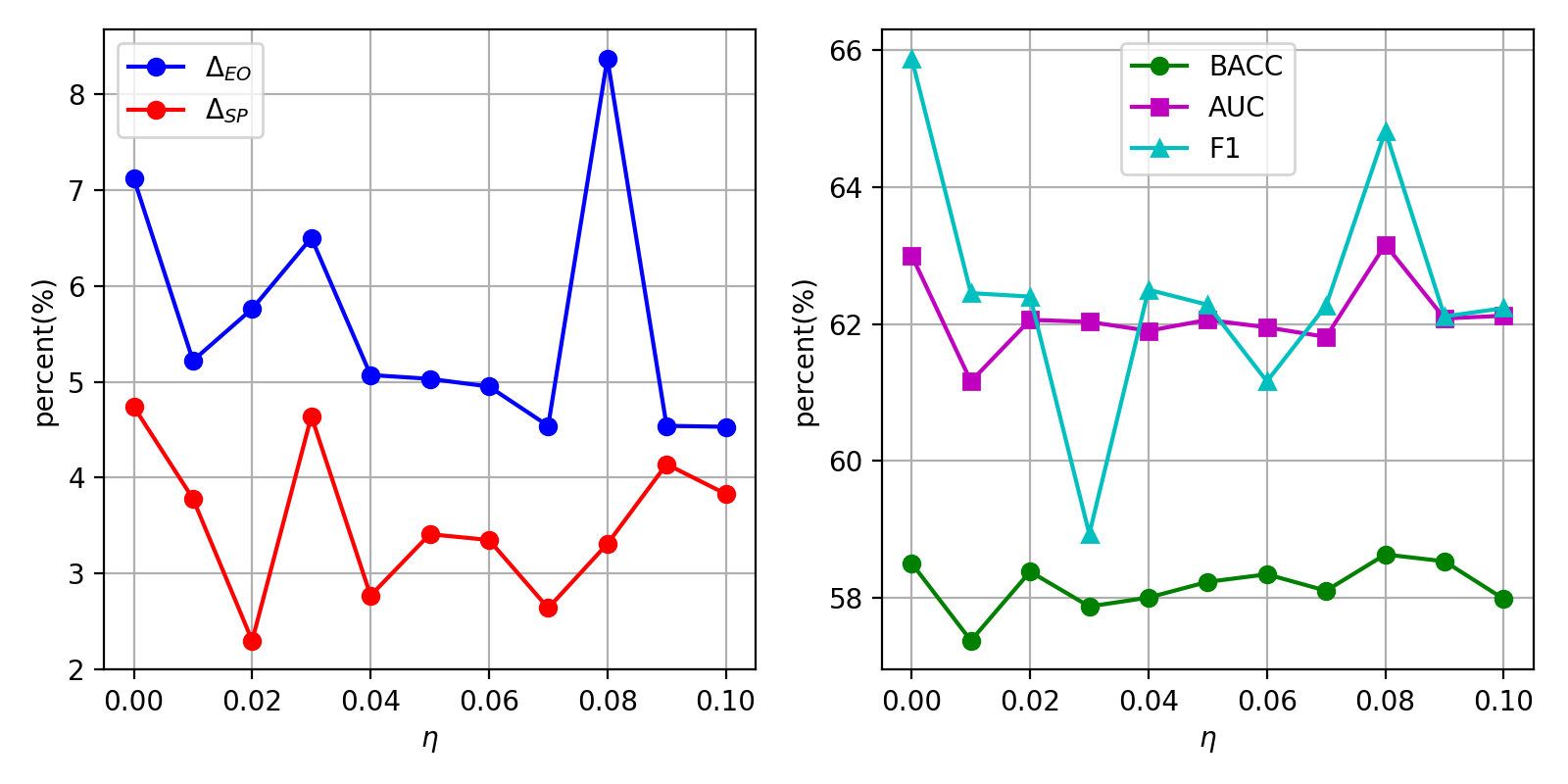}
\end{subfigure}
\begin{subfigure}[t]{0.40\textwidth}
\includegraphics[width=\textwidth]{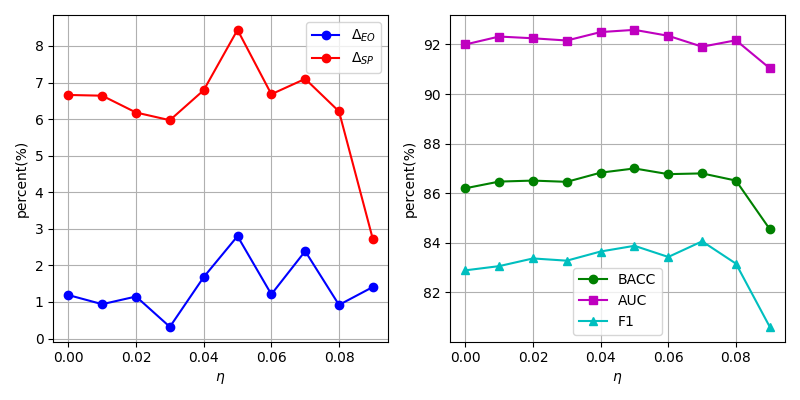}
\end{subfigure}
\caption{Hyper-parameter ( \(\eta\)) study on the German (two plots on the right) and Bail (two plots on the left) dataset.}
\label{fig:eta}
\end{figure*}

Additionally, Figures \ref{fig:tsne-german} and \ref{fig:tsne-bail} present t-SNE visualizations that compare the environment and content components extracted using the CAF and HSCCAF methods, respectively. In the case of CAF, although the environmental representation is theoretically designed to encode the sensitive attributes, the visualizations reveal that it does not accurately capture this information. Instead, the content representation appears to retain some level of entanglement with the sensitive attributes, suggesting that the method struggles to fully separate these representations. This failure highlights the challenge of achieving effective disentanglement using CAF alone.
\begin{figure*}[ht]
\centering
\begin{subfigure}[t]{0.42\textwidth}
\includegraphics[width=\textwidth]{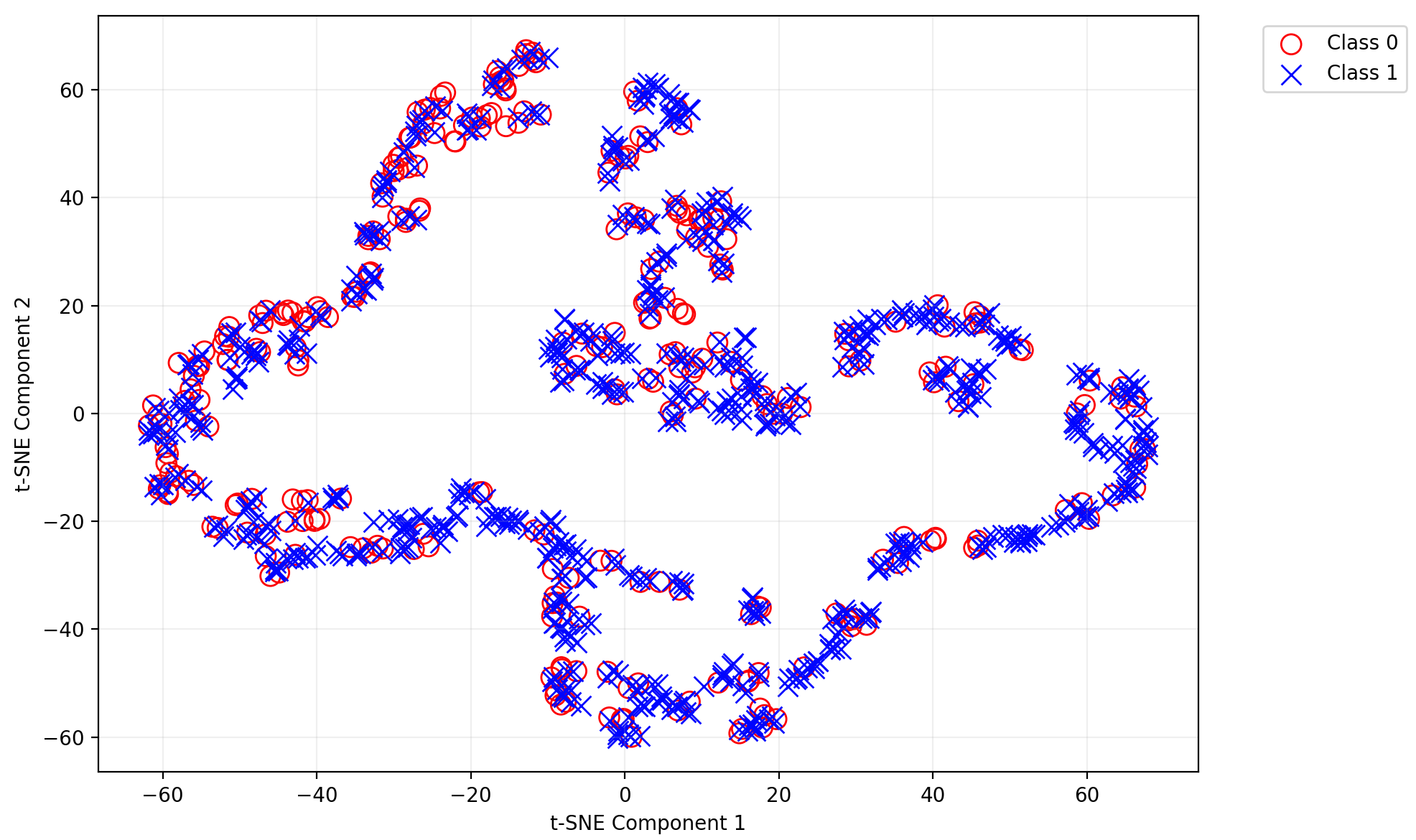}
\caption{CAF (labeled according to class labels)}
\end{subfigure}
\begin{subfigure}[t]{0.42\textwidth}
\includegraphics[width=\textwidth]{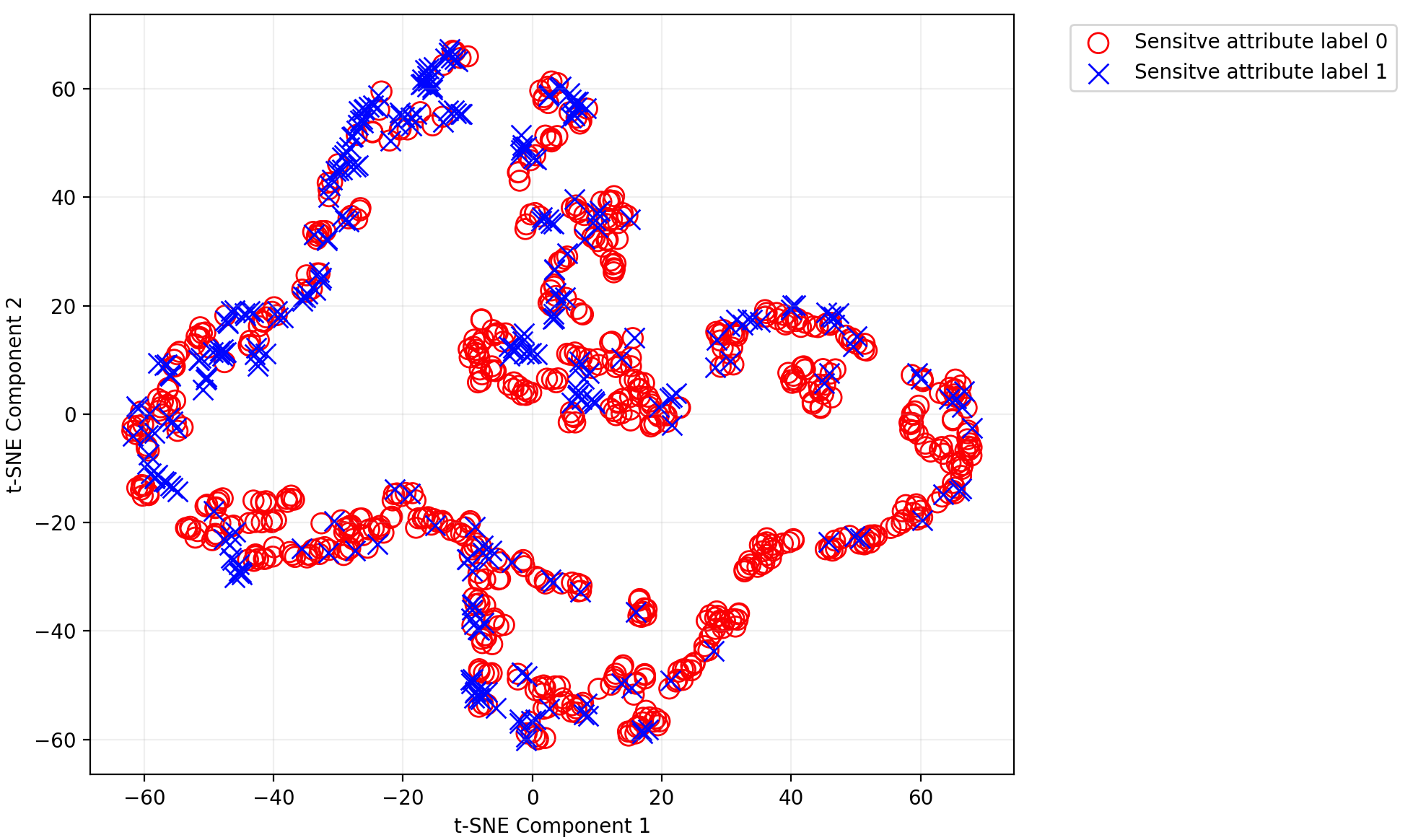}
\caption{CAF (labeled according to sensitive attribute labels)}
\end{subfigure}
\begin{subfigure}[t]{0.42\textwidth}
\includegraphics[width=\textwidth]{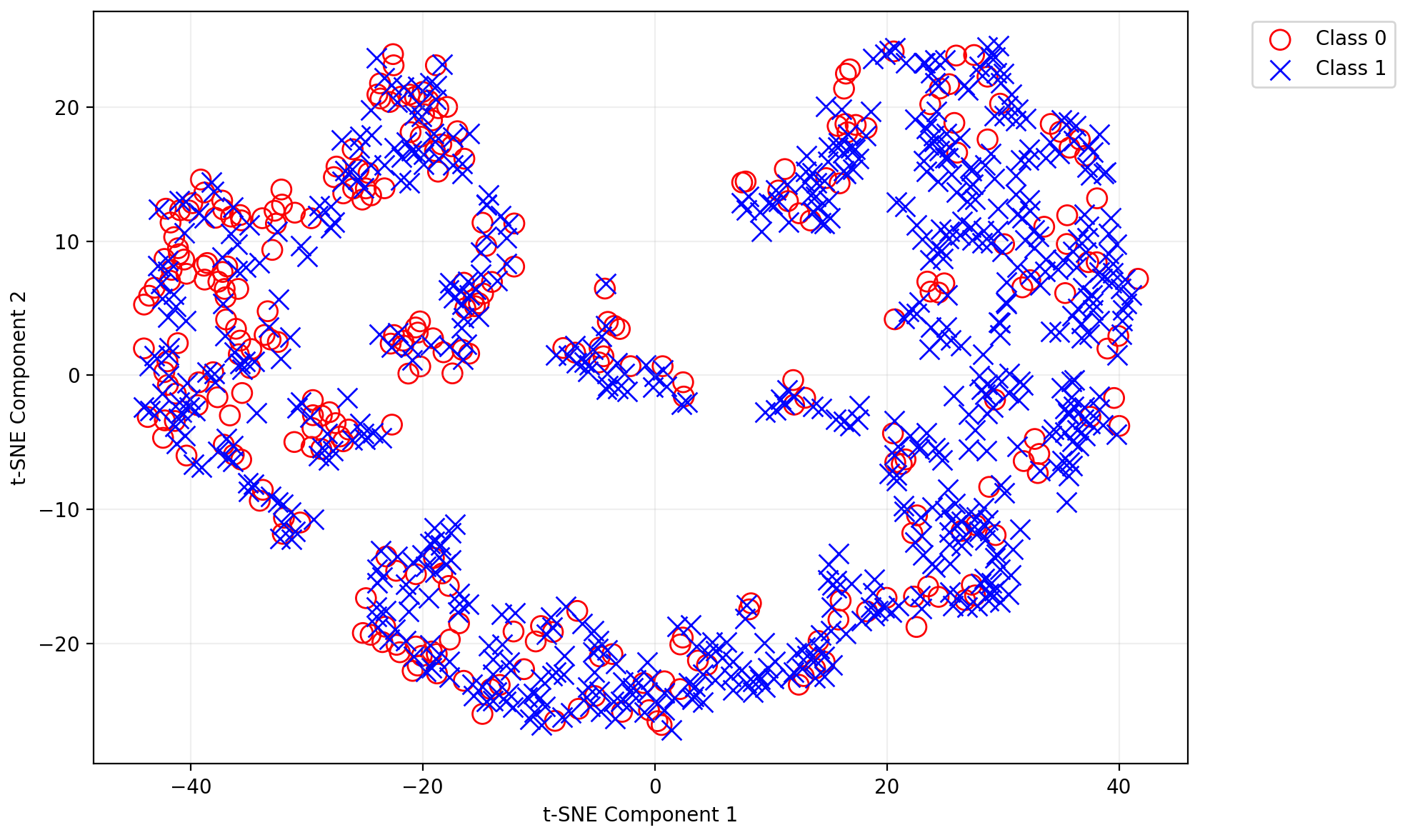}
\caption{HSCCAF (labeled according to class labels)}
\end{subfigure}
\begin{subfigure}[t]{0.42\textwidth}
\includegraphics[width=\textwidth]{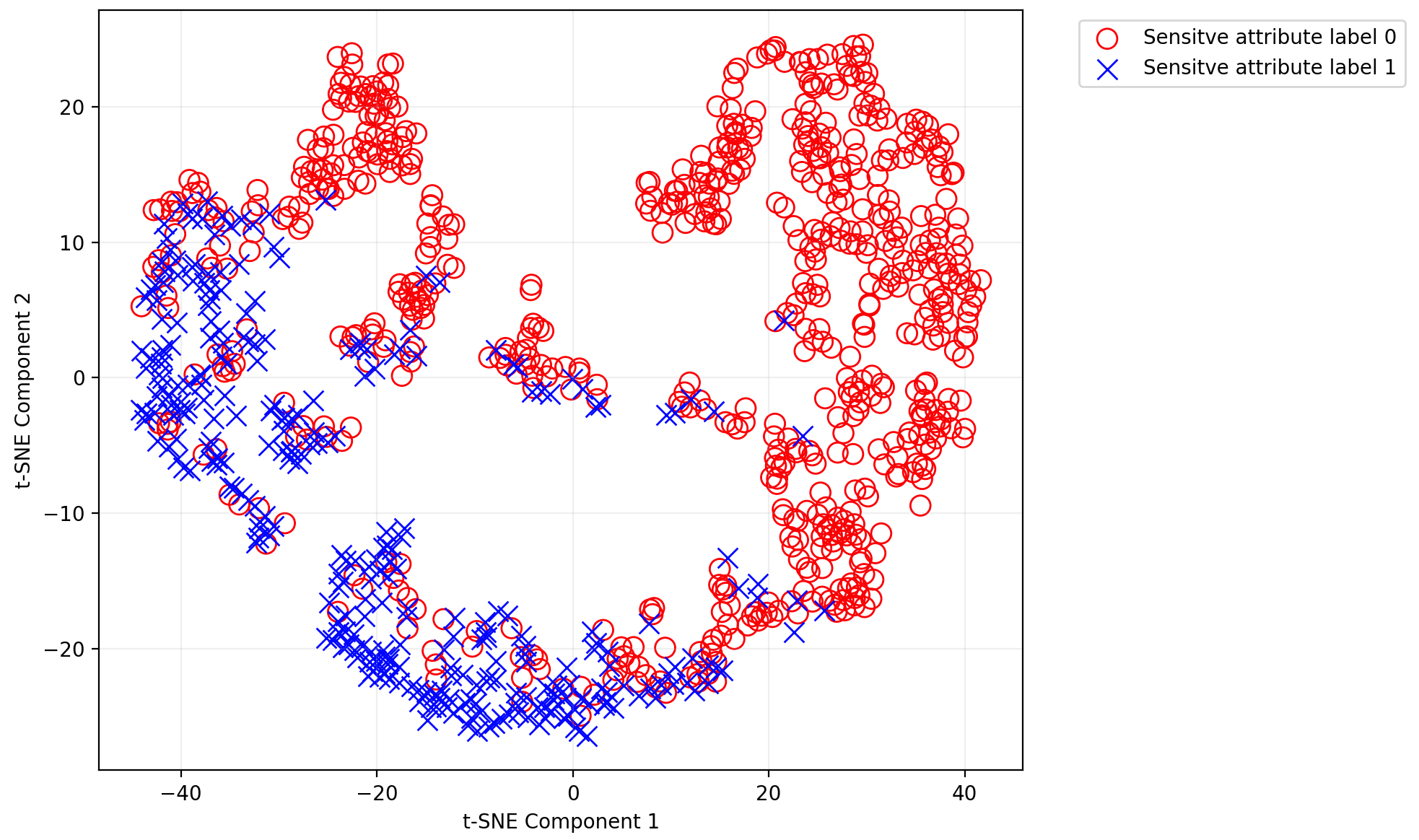}
\caption{HSCCAF (labeled according to sensitive attribute labels)}
\end{subfigure}
\caption{Projection of the embeddings in the Environment component using TSNE on the German dataset.}
\label{fig:tsne-german}
\end{figure*}

On the other hand, our proposed method, HSCCAF, demonstrates a significant improvement in this regard, especially for the German dataset. The t-SNE plots show that HSCCAF effectively isolates sensitive information in the environment component, while ensuring that the content component remains uninfluenced by sensitive attributes. Moreover, HSCCAF successfully preserves the information relevant to the class labels without exposing sensitive data. This behavior indicates that HSCCAF not only excels in disentangling sensitive and task-related information, but also does so in a way that maintains the utility of the model. By improving the separation of sensitive information, HSCCAF ensures fairness without sacrificing performance, providing a more robust solution to the problem of fairness in graph-based tasks.

\begin{figure*}[ht]
\centering
\begin{subfigure}[t]{0.42\textwidth}
\includegraphics[width=\textwidth]{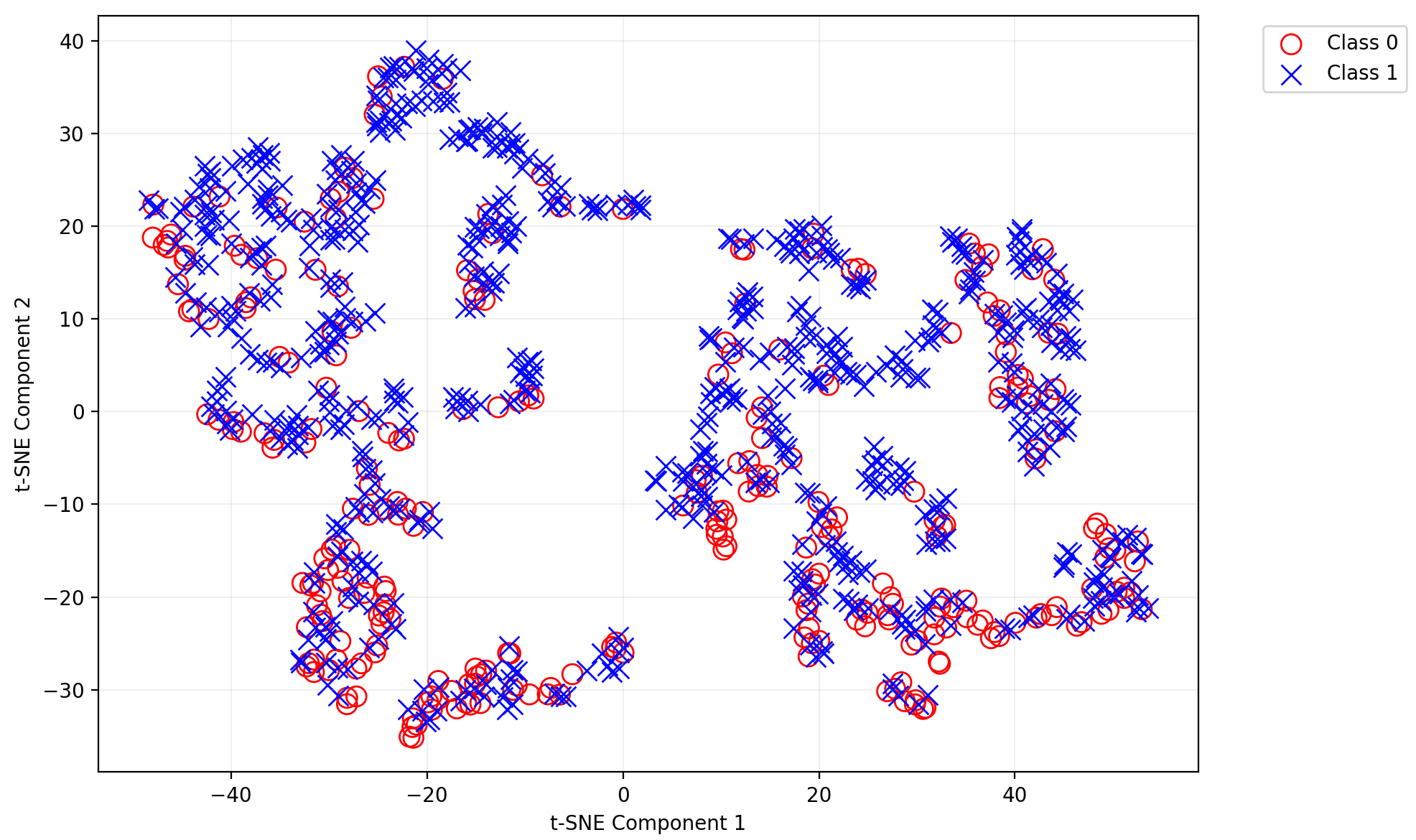}
\caption{CAF (labeled according to class labels)}
\end{subfigure}
\begin{subfigure}[t]{0.42\textwidth}
\includegraphics[width=\textwidth]{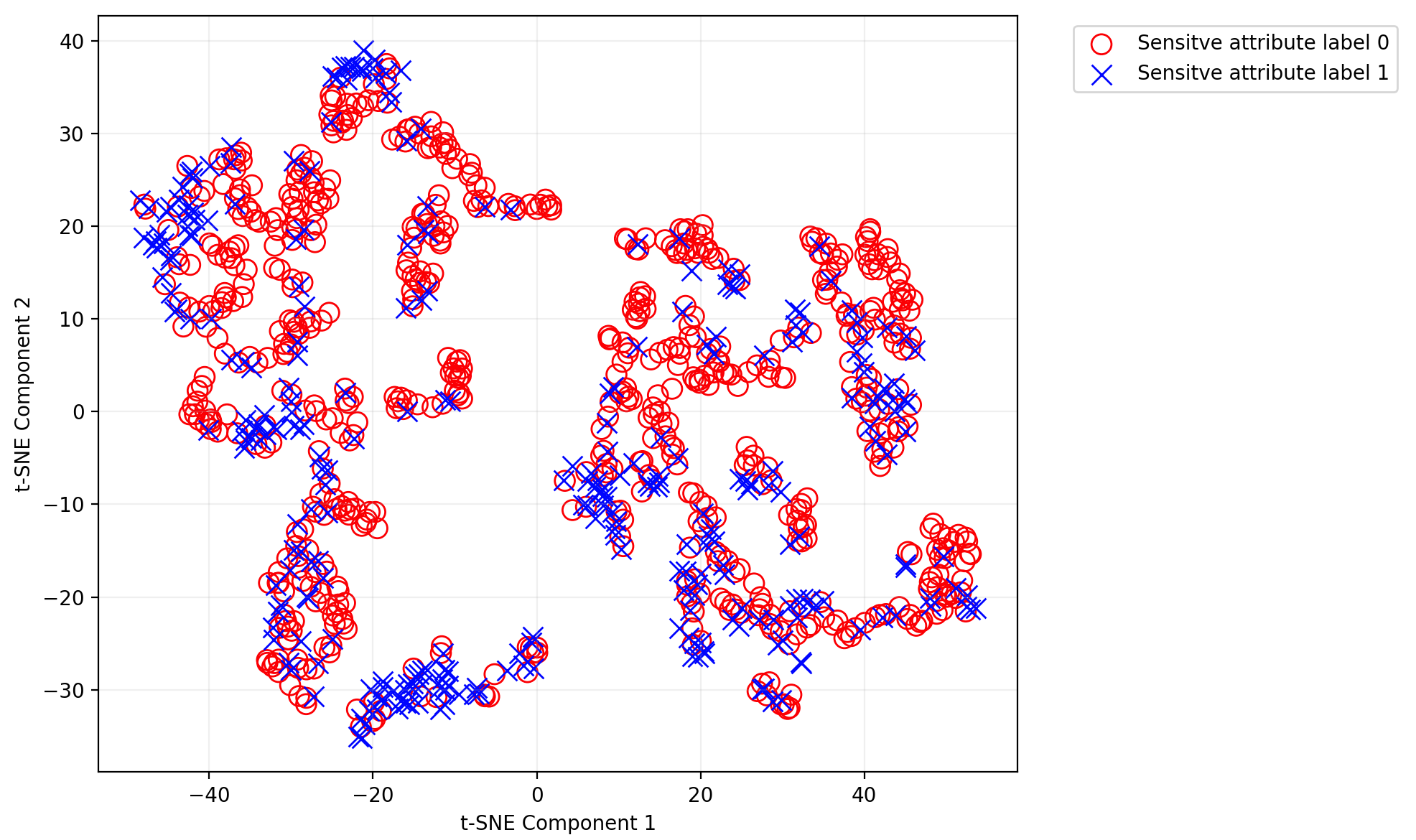}
\caption{CAF (labeled according to sensitive attribute labels)}
\end{subfigure}
\begin{subfigure}[t]{0.42\textwidth}
\includegraphics[width=\textwidth]{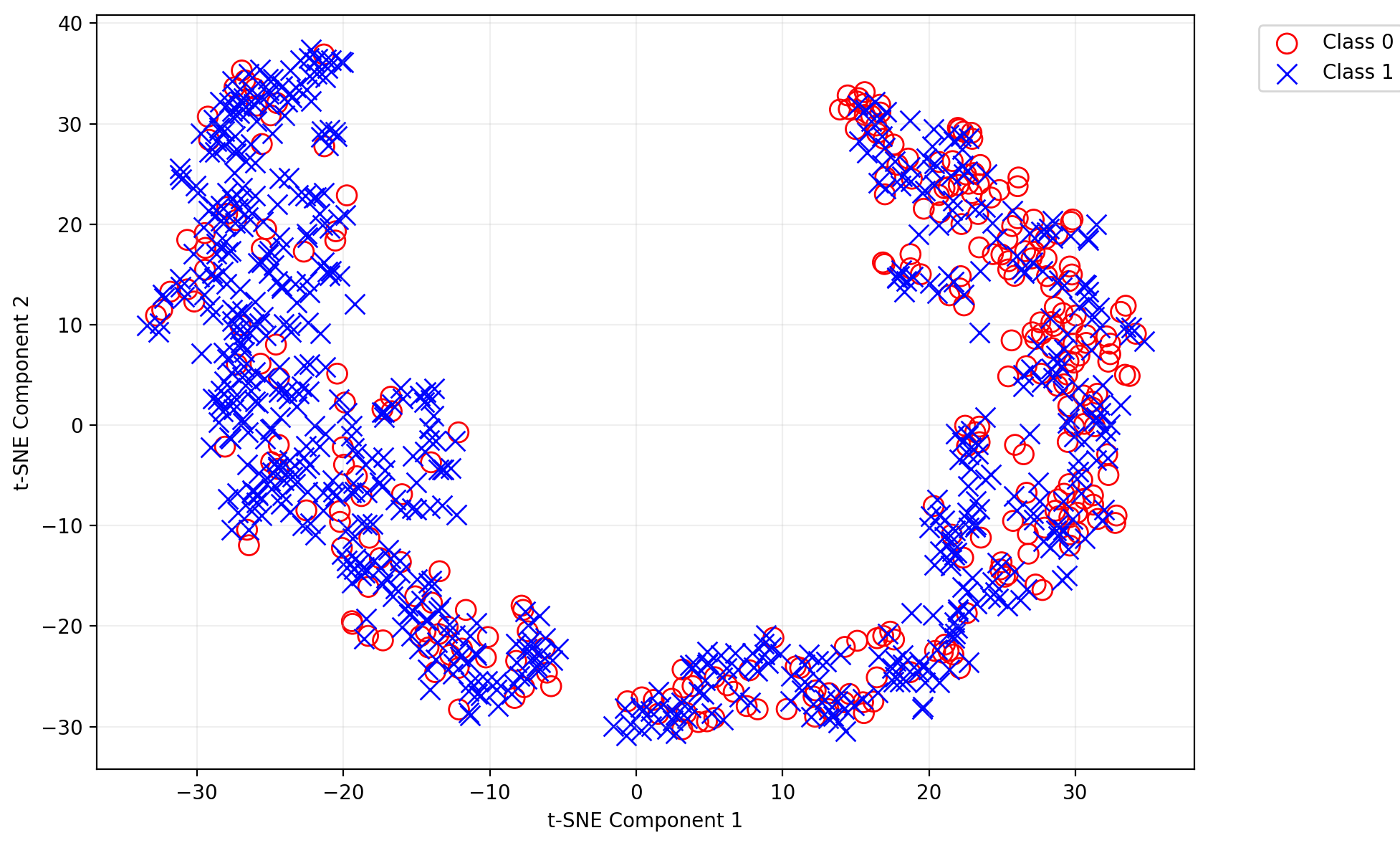}
\caption{HSCCAF (labeled according to class labels)}
\end{subfigure}
\begin{subfigure}[t]{0.42\textwidth}
\includegraphics[width=\textwidth]{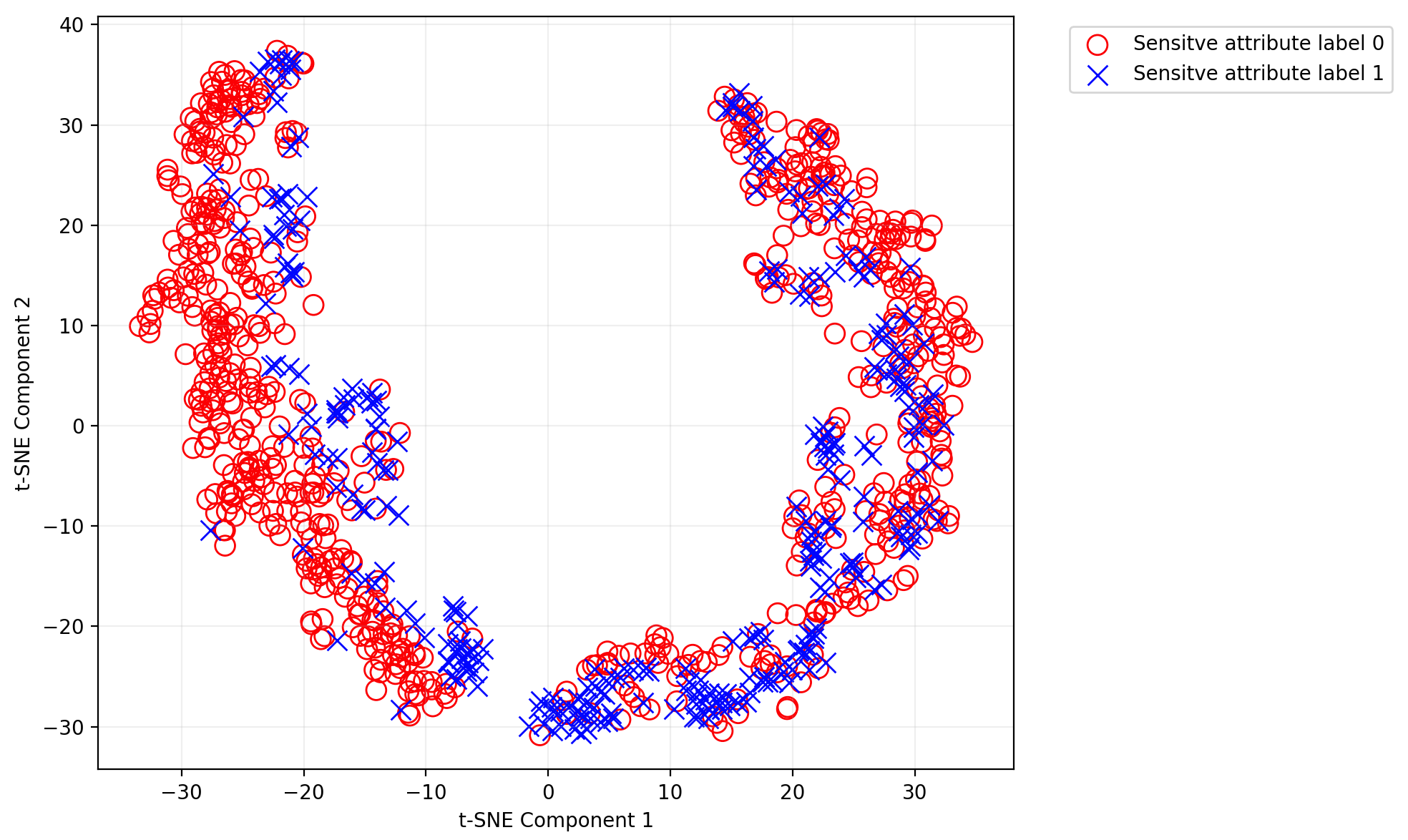}
\caption{HSCCAF (labeled according to sensitive attribute labels)}
\end{subfigure}
\caption{Projection of the embeddings in the Content component using TSNE on the German dataset.}
\label{fig:tsne-bail}
\end{figure*}
\paragraph{Best hyperparameters}
The optimal hyperparameters for the German, Bail, Credit, and NBA datasets, as shown in Table~\ref{hyper}, indicate distinct prioritizations for fairness and accuracy. A key observation is that the optimal value of $\beta$ is consistently equal to $1$ across all datasets. This consistency highlights the importance of preserving graph structure, ensuring that topological information is retained while other hyperparameters adjust to balance predictive accuracy, fairness, and environmental constraints.  

For the German dataset, the large value of $\alpha = 10$ emphasizes predictive accuracy, while $\gamma = 1$ enforces orthogonality between content and environment. The moderate values of $\omega = 0.3$ and $\eta = 0.09$ balance contrastive learning and environmental regularization.  

In the Bail dataset, the smaller $\alpha = 0.2$ reflects reduced emphasis on accuracy, while the very low $\gamma = 0.02$ indicates weak content--environment separation. The small values of $\omega = 0.03$ and $\eta = 0.07$ apply mild contrastive and environmental constraints.  

For the Credit dataset, $\alpha = 0.5$ provides a balance between accuracy and fairness. The relatively large $\omega = 0.7$ emphasizes contrastive learning, while $\gamma = 1$ and $\eta = 0.06$ ensure strong separation between content and environment with moderate environmental regularization.  

In the NBA dataset, $\alpha = 0.9$ again prioritizes predictive accuracy. The values $\gamma = 1$ and $\eta = 0.8$ highlight strong content-environment separation and environmental constraints, complemented by a small $\omega = 0.09$ that moderates the role of contrastive loss.

In the Pokec-n dataset, $\alpha = 0.2$ reflects reduced emphasis on accuracy, while $\gamma = 1$ enforces orthogonality between content and environment. The small values of $\omega = 0.09$ and $\eta = 0.1$ balance contrastive learning and environmental regularization.

Overall, the results demonstrate that while $\beta = 1$ universally preserves structure, each dataset requires a tailored balance of accuracy, fairness, and environmental regularization to achieve optimal performance.

\begin{table}[h]
\centering
\caption{Optimal HSCCAF Hyper-parameters per Dataset}\label{hyper}
\resizebox{0.40\textwidth}{!}{%
\begin{tabular}{|l|c|c|c|c|c|}
\hline
\textbf{Dataset} & $\alpha$ & $\beta$ & $\gamma$ & $\omega$ & $\eta$ \\
\hline
\textbf{German} & 10 & 1 & 1 & 0.3 & 0.09 \\
\hline
\textbf{Bail} & 0.2 & 1 & 0.02 & 0.03 & 0.07 \\
\hline
\textbf{Credit} & 0.5 & 1 & 1 & 0.7 & 0.06 \\
\hline
\textbf{NBA} & 0.9 & 1 & 1 & 0.09 & 0.8 \\
\hline
\textbf{Pokec-n} & 0.2 & 1 & 1 & 0.09 & 0.1 \\
\hline
\end{tabular}
}
\end{table}


In summary, HSCCAF effectively balances fairness and performance across different datasets.

\section{Conclusion}
In this paper, we proposed HSCCAF, an extension of the CAF model for fair node representation learning. HSCCAF integrates an editing strategy to mitigate topological bias, a supervised contrastive loss to enhance content learning, and an environmental loss to strengthen the association with sensitive attributes. Together, these components ensure effective disentanglement of content and sensitive information while improving predictive performance.  

Experimental results on multiple real-world datasets demonstrate that HSCCAF consistently enhances fairness 
while maintaining competitive accuracy,
outperforming CAF and other state-of-the-art baselines. Future work will explore extending HSCCAF to dynamic and heterogeneous graphs and integrating it with graph foundation models for broader applicability.
\subsection*{Limitations}
A key limitation of our approach, as well as that of the CAF paper, is its reliance on fully supervised sensitive attribute labels during training. In practice, access to complete and accurate sensitive attribute information is often limited or infeasible, especially in real-world scenarios with privacy constraints or incomplete annotations. While the environmental loss and supervised contrastive loss components in our framework do not require fully supervised sensitive attribute labels, other parts of the method still depend on them, which may reduce applicability in low-resource or partially labeled settings.
Another limitation is the inclusion of both contrastive and environmental losses, which introduces additional hyper-parameters that require careful tuning. Although our ablation study demonstrates that these components improve both fairness and accuracy, the need for hyper-parameter optimization may affect the method’s robustness and generalizability across different datasets and environments.
Finally, our method is limited to homophily-based graphs, restricting its applicability to graphs where connections primarily exist between similar nodes.


\begin{thebibliography}{10}

\bibitem{zhu2020beyond}
Jiong Zhu, Yujun Yan, Lingxiao Zhao, Mark Heimann, Leman Akoglu, and Danai Koutra.
\newblock Beyond homophily in graph neural networks: Current limitations and effective designs.
\newblock {\em Advances in neural information processing systems}, 33:7793--7804, 2020.

\bibitem{Gkarmpounis2024}
Georgios Gkarmpounis, Christos Vranis, Nicholas Vretos, and Petros Daras.
\newblock Survey on graph neural networks.
\newblock {\em IEEE Access}, 12:128816--128832, 2024.

\bibitem{chen-survey}
April Chen, Ryan~A. Rossi, Namyong Park, Puja Trivedi, Yu~Wang, Tong Yu, Sungchul Kim, Franck Dernoncourt, and Nesreen~K. Ahmed.
\newblock Fairness-aware graph neural networks: A survey.
\newblock {\em ACM Trans. Knowl. Discov. Data}, 18(6), April 2024.

\bibitem{chouldechova2020snapshot}
Alexandra Chouldechova and Aaron Roth.
\newblock A snapshot of the frontiers of fairness in machine learning.
\newblock {\em Communications of the ACM}, 63(5):82--89, 2020.

\bibitem{3495724.3497012}
Luca Oneto, Michele Donini, Giulia Luise, Carlo Ciliberto, Andreas Maurer, and Massimiliano Pontil.
\newblock Exploiting mmd and sinkhorn divergences for fair and transferable representation learning.
\newblock In {\em Proceedings of the 34th International Conference on Neural Information Processing Systems}, NIPS'20, Red Hook, NY, USA, 2020. Curran Associates Inc.

\bibitem{barocas2017fairness}
Solon Barocas, Moritz Hardt, and Arvind Narayanan.
\newblock Fairness in machine learning.
\newblock {\em Nips tutorial}, 1:2, 2017.

\bibitem{besse2019can}
Philippe Besse, C{\'e}line Castets-Renard, Aur{\'e}lien Garivier, and Jean-Michel Loubes.
\newblock Can everyday ai be ethical? machine learning algorithm fairness.
\newblock {\em Machine Learning Algorithm Fairness (May 20, 2018). Statistiques et Soci{\'e}t{\'e}}, 6(3), 2019.

\bibitem{besse2021survey}
Philippe Besse, Eustasio del Barrio, Paula Gordaliza, Jean-Michel Loubes, and Laurent Risser.
\newblock A survey of bias in machine learning through the prism of statistical parity.
\newblock {\em The American Statistician}, pages 1--11, 2021.

\bibitem{chen2024fairness}
April Chen, Ryan~A Rossi, Namyong Park, Puja Trivedi, Yu~Wang, Tong Yu, Sungchul Kim, Franck Dernoncourt, and Nesreen~K Ahmed.
\newblock Fairness-aware graph neural networks: A survey.
\newblock {\em ACM Transactions on Knowledge Discovery from Data}, 18(6):1--23, 2024.

\bibitem{spinelli2021fairdrop}
Indro Spinelli, Simone Scardapane, Amir Hussain, and Aurelio Uncini.
\newblock Fairdrop: Biased edge dropout for enhancing fairness in graph representation learning.
\newblock {\em IEEE Transactions on Artificial Intelligence}, 3(3):344--354, 2021.

\bibitem{liu2023generalized}
Zemin Liu, Trung-Kien Nguyen, and Yuan Fang.
\newblock On generalized degree fairness in graph neural networks.
\newblock In {\em Proceedings of the AAAI Conference on Artificial Intelligence}, volume~37, pages 4525--4533, 2023.

\bibitem{kose2023fairness}
O~Deniz Kose, Yanning Shen, and Gonzalo Mateos.
\newblock Fairness-aware graph filter design.
\newblock In {\em 2023 57th Asilomar Conference on Signals, Systems, and Computers}, pages 330--334. IEEE, 2023.

\bibitem{kusner2017counterfactua}
Matt~J Kusner, Joshua Loftus, Chris Russell, and Ricardo Silva.
\newblock Counterfactual fairness.
\newblock In {\em Advances in Neural Information Processing Systems}, volume~30, pages 4066--4076. Curran Associates, Inc., 2017.

\bibitem{de2021transport}
Lucas De~Lara, Alberto Gonz{\'a}lez-Sanz, Nicholas Asher, and Jean-Michel Loubes.
\newblock Transport-based counterfactual models.
\newblock {\em arXiv preprint arXiv:2108.13025}, 2021.

\bibitem{kang2020inform}
Jian Kang, Jingrui He, Ross Maciejewski, and Hanghang Tong.
\newblock Inform: Individual fairness on graph mining.
\newblock In {\em Proceedings of the 26th ACM SIGKDD international conference on knowledge discovery \& data mining}, pages 379--389, 2020.

\bibitem{agarwal2021towards}
Chirag Agarwal, Himabindu Lakkaraju, and Marinka Zitnik.
\newblock Towards a unified framework for fair and stable graph representation learning.
\newblock In {\em Uncertainty in Artificial Intelligence}, pages 2114--2124. PMLR, 2021.

\bibitem{ma2022learning}
Jing Ma, Ruocheng Guo, Mengting Wan, Longqi Yang, Aidong Zhang, and Jundong Li.
\newblock Learning fair node representations with graph counterfactual fairness.
\newblock In {\em Proceedings of the Fifteenth ACM International Conference on Web Search and Data Mining}, pages 695--703, 2022.

\bibitem{guo2023towards}
Zhimeng Guo, Jialiang Li, Teng Xiao, Yao Ma, and Suhang Wang.
\newblock Towards fair graph neural networks via graph counterfactual.
\newblock In {\em Proceedings of the 32nd ACM international conference on information and knowledge management}, pages 669--678, 2023.

\bibitem{li2024rethinking}
Zhixun Li, Yushun Dong, Qiang Liu, and Jeffrey~Xu Yu.
\newblock Rethinking fair graph neural networks from re-balancing.
\newblock In {\em Proceedings of the 30th ACM SIGKDD conference on knowledge discovery and data mining}, pages 1736--1745, 2024.

\bibitem{jiang2022topology}
Zhimeng Jiang, Xiaotian Han, Chao Fan, Zirui Liu, Xiao Huang, Na~Zou, Ali Mostafavi, and Xia Hu.
\newblock Topology matters in fair graph learning: a theoretical pilot study.(2022), 2022.

\bibitem{mcpherson2001birds}
Miller McPherson, Lynn Smith-Lovin, and James~M Cook.
\newblock Birds of a feather: Homophily in social networks.
\newblock {\em Annual review of sociology}, 27(1):415--444, 2001.

\bibitem{ciotti2016homophily}
Valerio Ciotti, Moreno Bonaventura, Vincenzo Nicosia, Pietro Panzarasa, and Vito Latora.
\newblock Homophily and missing links in citation networks.
\newblock {\em EPJ Data Science}, 5(1):7, 2016.

\bibitem{loveland2025unveiling}
Donald Loveland and Danai Koutra.
\newblock Unveiling the impact of local homophily on gnn fairness: In-depth analysis and new benchmarks.
\newblock In {\em Proceedings of the 2025 SIAM International Conference on Data Mining (SDM)}, pages 608--617. SIAM, 2025.

\bibitem{laclau2022survey}
Charlotte Laclau, Christine Largeron, and Manvi Choudhary.
\newblock A survey on fairness for machine learning on graphs.
\newblock {\em arXiv preprint arXiv:2205.05396}, 2022.

\bibitem{dai2021say}
Enyan Dai and Suhang Wang.
\newblock Say no to the discrimination: Learning fair graph neural networks with limited sensitive attribute information.
\newblock In {\em Proceedings of the 14th ACM International Conference on Web Search and Data Mining}, pages 680--688, 2021.

\bibitem{wang2022improving}
Yu~Wang, Yuying Zhao, Yushun Dong, Huiyuan Chen, Jundong Li, and Tyler Derr.
\newblock Improving fairness in graph neural networks via mitigating sensitive attribute leakage.
\newblock In {\em Proceedings of the 28th ACM SIGKDD conference on knowledge discovery and data mining}, pages 1938--1948, 2022.

\bibitem{rahman2019fairwalk}
Tahleen Rahman, Bartlomiej Surma, Michael Backes, and Yang Zhang.
\newblock Fairwalk: Towards fair graph embedding.
\newblock In {\em Proceedings of the Twenty-Eighth International Joint Conference on Artificial Intelligence, {IJCAI} 2019}, pages 3289--3295. IJCAI, 2019.

\bibitem{dong2022edits}
Yushun Dong, Ninghao Liu, Brian Jalaian, and Jundong Li.
\newblock Edits: Modeling and mitigating data bias for graph neural networks.
\newblock In {\em Proceedings of the ACM Web Conference 2022}, pages 1259--1269, 2022.

\bibitem{gordaliza2019obtaining}
Paula Gordaliza, Eustasio Del~Barrio, Gamboa Fabrice, and Jean-Michel Loubes.
\newblock Obtaining fairness using optimal transport theory.
\newblock In {\em International conference on machine learning}, pages 2357--2365. PMLR, 2019.

\bibitem{laclau2021all}
Charlotte Laclau, Ievgen Redko, Manvi Choudhary, and Christine Largeron.
\newblock All of the fairness for edge prediction with optimal transport.
\newblock In {\em International Conference on Artificial Intelligence and Statistics}, pages 1774--1782. PMLR, 2021.

\bibitem{zhu2024fair}
Yuchang Zhu, Jintang Li, Zibin Zheng, and Liang Chen.
\newblock Fair graph representation learning via sensitive attribute disentanglement.
\newblock In {\em Proceedings of the ACM Web Conference 2024}, pages 1182--1192, 2024.

\bibitem{liu5320563rethinking}
Chuxun Liu, Debo Cheng, Qingfeng Chen, Jiuyong Li, Lin Liu, and Rongyao Hu.
\newblock Rethinking fair graph representation learning via structural rebalancing.
\newblock {\em Available at SSRN 5320563}, 2025.

\bibitem{bondy2008graph}
John~Adrian Bondy and Uppaluri Siva~Ramachandra Murty.
\newblock {\em Graph theory}.
\newblock Springer Publishing Company, Incorporated, 2008.

\bibitem{kipf2016semi}
Thomas~N Kipf and Max Welling.
\newblock Semi-supervised classification with graph convolutional networks.
\newblock {\em arXiv preprint arXiv:1609.02907}, 2016.

\bibitem{hamilton2017inductive}
Will Hamilton, Zhitao Ying, and Jure Leskovec.
\newblock Inductive representation learning on large graphs.
\newblock {\em Advances in neural information processing systems}, 30, 2017.

\bibitem{chen2023fairness}
April Chen, Ryan~A Rossi, Namyong Park, Puja Trivedi, Yu~Wang, Tong Yu, Sungchul Kim, Franck Dernoncourt, and Nesreen~K Ahmed.
\newblock Fairness-aware graph neural networks: A survey.
\newblock {\em ACM Transactions on Knowledge Discovery from Data}, 2023.

\bibitem{khosla2020supervised}
Prannay Khosla, Piotr Teterwak, Chen Wang, Aaron Sarna, Yonglong Tian, Phillip Isola, Aaron Maschinot, Ce~Liu, and Dilip Krishnan.
\newblock Supervised contrastive learning.
\newblock {\em Advances in neural information processing systems}, 33:18661--18673, 2020.

\bibitem{kobayashi2021t}
Takumi Kobayashi.
\newblock T-vmf similarity for regularizing intra-class feature distribution.
\newblock In {\em Proceedings of the IEEE/CVF conference on computer vision and pattern recognition}, pages 6616--6625, 2021.

\bibitem{asuncion2007uci}
Arthur Asuncion and David Newman.
\newblock Uci machine learning repository, 2007.

\bibitem{jordan2015effect}
Kareem~L Jordan and Tina~L Freiburger.
\newblock The effect of race/ethnicity on sentencing: Examining sentence type, jail length, and prison length.
\newblock {\em Journal of Ethnicity in Criminal Justice}, 13(3):179--196, 2015.

\bibitem{yeh2009comparisons}
I-Cheng Yeh and Che-hui Lien.
\newblock The comparisons of data mining techniques for the predictive accuracy of probability of default of credit card clients.
\newblock {\em Expert systems with applications}, 36(2):2473--2480, 2009.

\bibitem{takac2012data}
Lubos Takac and Michal Zabovsky.
\newblock Data analysis in public social networks.
\newblock In {\em International scientific conference and international workshop present day trends of innovations}, volume~1, 2012.

\bibitem{xu2019powerful}
Keyulu Xu, Weihua Hu, Jure Leskovec, and Stefanie Jegelka.
\newblock How powerful are graph neural networks?, 2019.

\bibitem{10.1145/3437963.3441752}
Enyan Dai and Suhang Wang.
\newblock Say no to the discrimination: Learning fair graph neural networks with limited sensitive attribute information.
\newblock In {\em Proceedings of the 14th ACM International Conference on Web Search and Data Mining}, WSDM '21, page 680–688, New York, NY, USA, 2021. Association for Computing Machinery.

\end{thebibliography}

\begin{appendices}
\section{Proof of Properties of Fairness-Aware Graph Editing}
\label{appendix}

Let \( G = (\mathcal{V}, \mathcal{E}) \) be a finite, simple, undirected graph with \( |\mathcal{V}| = n \) and \( |\mathcal{E}| = n \geq 1\). 
Each node \(v \in \mathcal{V}\) carries a binary class label  \(y(v) \in\{0,1\}\) and 
a binary sensitive attribute  \(s(v) \in\{0,1\}\).

Define the number of class-homophilous and sensitive-homophilous edges as
\begin{equation}
N_c = \bigl|\{\,\{u,v\}\in \mathcal{E}:\ y(u)=y(v)\,\}\bigr| 
\end{equation}
\begin{equation}
N_s = \bigl|\{\,\{u,v\}\in \mathcal{E}:\ s(u)=s(v)\,\}\bigr|
\end{equation}

The global edge-homophily ratios are then
\begin{equation}
hr_{c} = \frac{N_c}{m} \label{eq:hc}
\end{equation}
\begin{equation}
hr_{s} = \frac{N_s}{m} \label{eq:hs}
\end{equation}

Finally, partition edges into four types depending on class and sensitive attribute:

\vspace{0.5em}
\begin{tabular}{@{}l l l@{}}
\textbf{Type I:}   & $y(u) = y(v),$     & $s(u) = s(v)$ \\
\textbf{Type II:}  & $y(u) = y(v),$     & $s(u) \neq s(v)$ \\
\textbf{Type III:} & $y(u) \neq y(v),$  & $s(u) = s(v)$ \\
\textbf{Type IV:}  & $y(u) \neq y(v),$  & $s(u) \neq s(v)$
\end{tabular}
\vspace{0.5em}

\paragraph{Operation (FairEdgeRemove)}
Let $K\subseteq \mathcal{E}$ be any set consisting only of Type~III edges. Form $G'=(\mathcal{V},\mathcal{E'})$ with $\mathcal{E'}=\mathcal{E}\setminus K$, and denote $m'=|\mathcal{E'}|=m-|K|$, $N_c'$, $N_s'$, and the updated ratios $hr_c'=N_c'/m'$, $hr_s'=N_s'/m'$.

\begin{theorem}[Monotonic effect of removing Type~III edges]
\label{thm:monotone}
For any $K$ consisting only of Type~III edges:
\begin{equation}
hr_c'\ \ge\ hr_c
\qquad\text{and}\qquad
hr_s'\ \le\ hr_s.
\end{equation}
Moreover, the changes satisfy the exact identities
\begin{equation}
hr'_c-hr_c=\frac{N_c\,|K|}{m(m-|K|)}\ge\ 0
\end{equation}
\begin{equation}
hr'_s-hr_s=\frac{|K|\,(N_s-m)}{m(m-|K|)}\ \le\ 0,
\end{equation}
with equality if and only if $|K|=0$ or $N_s=m$.
\end{theorem}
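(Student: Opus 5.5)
The plan is a direct computation. The key observation is how each edge type contributes to the two counts. Every Type~III edge satisfies $y(u)\neq y(v)$, so it is never counted in $N_c$. It also satisfies $s(u)=s(v)$, so it is always counted in $N_s$. Removing a set $K$ of Type~III edges therefore gives
\begin{equation*}
N_c' = N_c, \qquad N_s' = N_s - |K|, \qquad m' = m-|K|.
\end{equation*}
Before computing, I will fix the standing conventions. I read $|\mathcal{E}|=n$ as $|\mathcal{E}|=m\ge 1$, since $m$ is the quantity used throughout. I also assume $K\neq\mathcal{E}$, so that $m'>0$ and the ratios $hr_c'$, $hr_s'$ are defined.

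Next I substitute into the definitions \eqref{eq:hc} and \eqref{eq:hs} and put each difference over the common denominator $m(m-|K|)$. For the class ratio,
\begin{equation*}
hr_c'-hr_c=\frac{N_c}{m-|K|}-\frac{N_c}{m}=\frac{N_c\,|K|}{m(m-|K|)}.
\end{equation*}
For the sensitive ratio, the numerator is $m(N_s-|K|)-N_s(m-|K|)=|K|(N_s-m)$, which gives
\begin{equation*}
hr_s'-hr_s=\frac{|K|\,(N_s-m)}{m(m-|K|)}.
\end{equation*}

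The signs then follow at once. The denominator $m(m-|K|)$ is positive, and $N_c\ge 0$, $|K|\ge 0$. Since $N_s\le m$, we have $N_s-m\le 0$. This yields $hr_c'\ge hr_c$ and $hr_s'\le hr_s$. For the equality clause, the second identity vanishes exactly when $|K|=0$ or $N_s=m$, because it is a product of these two factors over a nonzero denominator.

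The only delicate point is bookkeeping, not mathematics. The stated ``if and only if'' condition is correct for the $hr_s$ identity. For $hr_c$, however, equality holds iff $|K|=0$ or $N_c=0$. I would state this explicitly rather than transfer the $N_s=m$ condition to $hr_c$. I would also record that $K=\mathcal{E}$ must be excluded, since then $m'=0$ and the ratios are undefined.
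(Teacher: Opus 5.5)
Your proposal is correct and follows the paper's proof: both note that Type~III edges leave $N_c$ unchanged while lowering $N_s$ and $m$ by $|K|$, put each difference over the common denominator $m(m-|K|)$, and get the signs from $N_s\le m$. Your caveats are also correct, and the paper's proof passes over them: the stated equality condition characterizes only the $hr_s$ identity (for $hr_c$, equality holds iff $|K|=0$ or $N_c=0$), and $K=\mathcal{E}$ must be excluded so that $m'>0$ and the ratios are defined.
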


\begin{proof}
Type~III edges contribute to the denominator $m$ but not to $N_c$. Thus $N_c'=N_c$ and $m'=m-|K|$, which yields
\begin{equation}
hr'_c-hr_c=\frac{N_c}{m-|K|}-\frac{N_c}{m}=\frac{N_c|K|}{m(m-|K|)}\ge 0.
\end{equation}

Type~III edges \emph{do} contribute to $N_s$. Hence $N_s'=N_s-|K|$ and
\begin{align}
hr'_s - hr_s
&= \frac{N_s - |K|}{m - |K|}
   - \frac{N_s}{m} \notag \\
&= \frac{m(N_s - |K|) - (m - |K|)N_s}{m(m - |K|)} \notag \\
&= \frac{|K|(N_s - m)}{m(m - |K|)} \le 0.
\end{align}

since $N_s\le m$. The equalities are immediate from the displayed formulas.
\end{proof}

\begin{proposition}[Single-edge type optimality under deletions] \label{prop:optimal}

Fix an integer $k\in\{1,\dots,m\}$. Among all choices of $k$ edges to delete:
\begin{itemize}
\item The maximal possible increase in $hr_c$ is achieved by deleting only Type~III or Type~IV edges, and the \emph{simultaneous} decrease of $hr_s$ selects Type~III as the unique edge type that increases $hr_c$ while decreasing $hr_s$.
\item The maximum possible decrease in $h_s$ is achieved by deleting only Type~I or Type~III edges, and the \emph{simultaneous} increase of $h_c$ selects Type~III as the unique edge type that decreases $hr_s$ while increasing $hr_c$.
\end{itemize}
Consequently, if the goal is to \emph{increase} $hr_c$ and \emph{decrease} $hr_s$ using exactly $k$ deletions, any optimal solution deletes $k$ Type~III edges whenever at least $k$ such edges exist.
2\end{proposition}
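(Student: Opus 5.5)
The plan is to parametrize an arbitrary deletion set of size $k$ by its type counts and compute both ratios exactly, in the style of Theorem~\ref{thm:monotone}.

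\textbf{Setup.} Let $a_{\mathrm{I}},a_{\mathrm{II}},a_{\mathrm{III}},a_{\mathrm{IV}}\ge 0$ be the numbers of deleted edges of each type, with $a_{\mathrm{I}}+a_{\mathrm{II}}+a_{\mathrm{III}}+a_{\mathrm{IV}}=k$. Type~I and II edges are counted in $N_c$, while Type~III and IV edges are not. Similarly, Type~I and III edges are counted in $N_s$, while Type~II and IV edges are not. Hence, after deletion,
\[
hr_c'=\frac{N_c-a_{\mathrm{I}}-a_{\mathrm{II}}}{m-k},\qquad hr_s'=\frac{N_s-a_{\mathrm{I}}-a_{\mathrm{III}}}{m-k}.
\]
The denominator $m-k$ depends only on $k$. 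So for fixed $k$, maximizing $hr_c'$ amounts to minimizing $a_{\mathrm{I}}+a_{\mathrm{II}}$. Its minimum value is $\max(0,\,k-(\#\mathrm{III}+\#\mathrm{IV}))$, which equals $0$ exactly when all $k$ deletions are Type~III or IV, provided enough such edges exist. By the same reasoning, minimizing $hr_s'$ amounts to maximizing $a_{\mathrm{I}}+a_{\mathrm{III}}$, which is attained by deleting only Type~I or III edges.

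These two observations give the first half of each bullet. For $k=m$ the ratios are undefined, so I would restrict to $k<m$ or adopt a convention for the empty graph. I would state this caveat explicitly.

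\textbf{Uniqueness of Type~III.} Consider a single deletion of each type and compute the sign of the change, as in the proof of Theorem~\ref{thm:monotone}. Deleting a Type~III edge gives $hr_c'-hr_c=N_c/(m(m-1))\ge0$ and $hr_s'-hr_s=(N_s-m)/(m(m-1))\le0$.

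Deleting a Type~IV edge increases $hr_c$, but it also increases $hr_s$, since the change is $N_s/(m(m-1))\ge 0$. Deleting a Type~I edge gives $hr_s$ change $(N_s-m)/(m(m-1))\le 0$, but the $hr_c$ change is $(N_c-m)/(m(m-1))\le0$. Deleting a Type~II edge moves both ratios the wrong way.

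Thus Type~III is the only type that belongs to both optimizing families, $\{\mathrm{III},\mathrm{IV}\}$ for $hr_c$ and $\{\mathrm{I},\mathrm{III}\}$ for $hr_s$. This intersection argument is the cleanest way to read the ``simultaneous'' clause in each bullet.

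\textbf{Consequence.} Suppose at least $k$ Type~III edges exist. Setting $a_{\mathrm{III}}=k$ attains both optima at once: $a_{\mathrm{I}}+a_{\mathrm{II}}=0$ is minimal and $a_{\mathrm{I}}+a_{\mathrm{III}}=k$ is maximal. It is therefore Pareto-optimal, and in fact a simultaneous maximizer of both objectives.

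Conversely, an optimal solution must satisfy $a_{\mathrm{I}}+a_{\mathrm{II}}=0$ and $a_{\mathrm{I}}+a_{\mathrm{III}}=k$. Together these force $a_{\mathrm{III}}=k$, provided both optima are required strictly.

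\textbf{Main obstacle.} The main difficulty is degenerate cases where the inequalities become equalities, for instance $N_s=m$ or $N_c=0$. In such cases other deletion sets tie, and ``any optimal solution deletes $k$ Type~III edges'' fails literally. I would handle this by making precise the notion of optimality: simultaneous optimality of both objectives as numerical functions of $(a_{\mathrm{I}},\dots,a_{\mathrm{IV}})$. I would then note that uniqueness holds whenever $N_c>0$ and $N_s<m$, and otherwise holds up to ties.
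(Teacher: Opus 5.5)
Your argument is correct, but it takes a different and tighter route than the paper's. The paper argues edge by edge. It tabulates the sign of $\Delta hr_c$ and $\Delta hr_s$ for a single deletion of each type (the case $|K|=1$ of Theorem~\ref{thm:monotone}) and notes that only Type~III has both desired signs. It then appeals to ``linearity of the numerators'' to conclude that any deletion set with both desired effects must consist of Type~III edges.

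You instead fix $k$ and write $hr_c'=(N_c-a_{\mathrm{I}}-a_{\mathrm{II}})/(m-k)$ and $hr_s'=(N_s-a_{\mathrm{I}}-a_{\mathrm{III}})/(m-k)$. Because the denominator is common to all $k$-sets, both objectives become linear functions of the type counts. Joint optimality then forces $a_{\mathrm{I}}=a_{\mathrm{II}}=0$ and $a_{\mathrm{III}}=k$.

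What this buys is a valid batch step. Read as a statement about directional effects, the paper's step is false. With $a_{\mathrm{I}}=a_{\mathrm{II}}=0$ one has $hr_s'-hr_s=(kN_s-m\,a_{\mathrm{III}})/(m(m-k))$. So a mix of Type~III and Type~IV edges with $a_{\mathrm{III}}>k\,hr_s$ still increases $hr_c$, by exactly the same $kN_c/(m(m-k))$ as a pure Type~III deletion, and strictly decreases $hr_s$. The ``consequence'' is therefore true only under the simultaneous-optimality reading you make explicit.

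Your single-edge signs are also the right ones. The paper's table lists Type~II as increasing $hr_c$, whereas deleting a class-homophilous edge changes $hr_c$ by $(N_c-m)/(m(m-1))\le 0$, as you say. Your caveats are appropriate: $k=m$ leaves the ratios undefined, and the first halves of the bullets need enough Type~III/IV (respectively I/III) edges.

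One simplification: the degenerate-case worry in your last paragraph is unnecessary within your own formulation. For $k<m$, $hr_c'$ is strictly decreasing in $a_{\mathrm{I}}+a_{\mathrm{II}}$, and $hr_s'$ is strictly decreasing in $a_{\mathrm{I}}+a_{\mathrm{III}}$. If $N_c=0$ or $N_s=m$, one objective becomes constant, but the other still forces $a_{\mathrm{III}}=k$. Hence the optimal type-count vector is unique whenever at least $k$ Type~III edges exist, with no ties.
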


\begin{proof}
Consider the per-edge effect on $(hr_c,hr_s)$ for each type using the same algebra as in Theorem~\ref{thm:monotone} with $|K|=1$:
\begin{equation}
\Delta hr_c =
\begin{cases}
\frac{N_c-m}{m(m-1)}<0, & \text{Type I}\\[2pt]
\frac{N_c}{m(m-1)}>0, & \text{Type II}\\[2pt]
\frac{N_c}{m(m-1)}>0, & \text{Type III}\\[2pt]
\frac{N_c}{m(m-1)}>0, & \text{Type IV}
\end{cases}
\end{equation}
\begin{equation}
\Delta hr_s =
\begin{cases}
\frac{N_s-m}{m(m-1)}\le 0, & \text{Type I}\\[2pt]
\frac{N_s}{m(m-1)}\ge 0, & \text{Type II}\\[2pt]
\frac{N_s-m}{m(m-1)}\le 0, & \text{Type III}\\[2pt]
\frac{N_s}{m(m-1)}\ge 0, & \text{Type IV}
\end{cases}
\end{equation}
Thus only Type~III simultaneously yields $\Delta hr_c>0$ and $\Delta hr_s<0$. Linearity of the numerators in batch deletions and the fact that each deletion updates only counts entering $N_c,N_s,m$ imply that any set with the desired monotone effects must be a subset of Type~III. If at least $k$ Type~III edges exist, deleting any $k$ of them achieves both targets, while any mixture including Types I, II, or IV fails at least one target. This yields the claims.
\end{proof}

\begin{corollary}[Minimal deletions to reach targets]
\label{cor:min-k}
Suppose targets $\tau_c\in(hr_c,1]$ and $\tau_s\in[0,hr_s)$ are given. Let $M_{\mathrm{III}}$ be the number of Type~III edges. Define
\begin{equation}
k_c^\star=\min\bigl\{k\in\{0,\dots,M_{\mathrm{III}}\}: \tfrac{N_c}{m-k}\ge \tau_c\bigr\},
\end{equation}
\begin{equation}
k_s^\star=\min\bigl\{k\in\{0,\dots,M_{\mathrm{III}}\}: \tfrac{N_s-k}{m-k}\le \tau_s\bigr\}.
\end{equation}
If $k^\star=\max\{k_c^\star,k_s^\star\}\le M_{\mathrm{III}}$, then deleting any $k^\star$ Type~III edges attains both targets with the minimum number of deletions.
\end{corollary}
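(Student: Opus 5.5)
The plan is to reduce the corollary to two monotonicity facts about deleting only Type~III edges, and then argue minimality through Proposition~\ref{prop:optimal}. By Theorem~\ref{thm:monotone}, deleting any $k$ Type~III edges gives $N_c'=N_c$, $N_s'=N_s-k$ and $m'=m-k$. The ratios after deletion therefore depend only on $k$, not on which Type~III edges are chosen:
\begin{equation*}
f_c(k)=\frac{N_c}{m-k},\qquad f_s(k)=\frac{N_s-k}{m-k}.
\end{equation*}
The first step is to check that $f_c$ is nondecreasing and $f_s$ is nonincreasing in $k$ on $\{0,\dots,M_{\mathrm{III}}\}$. This is exactly the content of the exact identities in Theorem~\ref{thm:monotone}, applied to consecutive values $k$ and $k+1$ (that is, one more deletion from the already edited graph, where $N_s'\le m'$ still holds). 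We need $m-k>0$. If $M_{\mathrm{III}}=m$, then $N_c=0$ and $\tau_c>hr_c=0$ cannot be reached, so the relevant $k$ are $<m$.

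The second step uses this monotonicity. Each of the sets defining $k_c^\star$ and $k_s^\star$ is upward closed in $\{0,\dots,M_{\mathrm{III}}\}$. Hence for $k^\star=\max\{k_c^\star,k_s^\star\}$ we get $f_c(k^\star)\ge f_c(k_c^\star)\ge\tau_c$ and $f_s(k^\star)\le f_s(k_s^\star)\le\tau_s$. Since the ratios do not depend on which Type~III edges are removed, deleting \emph{any} $k^\star$ Type~III edges attains both targets. Among Type~III deletions the count is also minimal: if $k<k^\star$, then $k<k_c^\star$ or $k<k_s^\star$, and by the definition of these minima one target fails.

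The third step is minimality over \emph{all} deletion sets, and I expect it to be the main obstacle. For this I would invoke Proposition~\ref{prop:optimal}, which says that any deletion set with the desired monotone effects consists of Type~III edges, so that restricting to Type~III is without loss of generality. Honestly this is delicate. Deleting a Type~II edge together with Type~III edges could raise $hr_c$ faster at the cost of $hr_s$, and a set of $k$ edges mixing Type~IV (which raises $hr_c$ as much as Type~III) and Type~I (which lowers $hr_s$ as much as Type~III) could in principle meet both targets with the same count.

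To make the argument rigorous, I would compare per count. Any set of $k$ deletions leaves $N_c''\le N_c$ and $m''=m-k$, so $hr_c''\le f_c(k)$. It also leaves $N_s''\ge N_s-k$, so $hr_s''\ge f_s(k)$. Both bounds hold with equality for all-Type~III deletions. So if some set of $k<k^\star$ edges met both targets, the all-Type~III choice of the same size would meet them as well, contradicting the Type~III minimality from the second step. This direct counting bound avoids the weaker argument in the proposition, and it is the version I would write.
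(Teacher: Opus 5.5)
Your proof is correct, and your first two steps match the paper's proof. Both read off $N_c/(m-k)$ and $(N_s-k)/(m-k)$ from Theorem~\ref{thm:monotone}, use monotonicity in $k$, and take the larger of the two thresholds.

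You diverge on minimality over arbitrary deletion sets. The paper simply cites Proposition~\ref{prop:optimal}, whose content is that any deletion set with the desired effects lies inside Type~III. You instead prove a per-count dominance bound: for any $k$ deletions, $hr_c''\le N_c/(m-k)$ and $hr_s''\ge (N_s-k)/(m-k)$, with equality for Type~III sets, which exist for all $k<k^\star\le M_{\mathrm{III}}$.

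Your route is the sounder one, because the proposition does not hold up. Its table gives Type~II deletions a positive $\Delta hr_c$, but deleting a class-homophilous edge gives $(N_c-m)/(m(m-1))\le 0$. So your worry about Type~II edges raising $hr_c$ is also unfounded. Mixed sets can also reach the targets. Take $2,4,2,2$ edges of Types I--IV, so $hr_c=0.6$ and $hr_s=0.4$, with $\tau_c=0.7$ and $\tau_s=0.38$. Then $k^\star=2$, and deleting one Type~III and one Type~IV edge gives $hr_c''=0.75\ge\tau_c$ and $hr_s''=0.375\le\tau_s$.

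Your bound needs no uniqueness of Type~III deletions. It only needs them to be at least as good as any other set of the same size, which is exactly what the minimality claim requires.
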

\begin{proof}
By Theorem~\ref{thm:monotone}, $hr_c(N)$ is nondecreasing and $hr_s(N)$ is nonincreasing in the number $N$ of deleted Type~III edges, with explicit forms $h_c(N)=N_c/(m-N)$ and $h_s(N)=(N_s-N)/(m-N)$. The smallest $N$ achieving each target is given by the displayed thresholds; their maximum achieves both simultaneously. Proposition~\ref{prop:optimal} shows optimality within all $k$-deletion strategies.\end{proof}
\end{appendices}
\end{document}